\newcommand{\SarielComp}[1]{}
\newcommand{\NotSarielComp}[1]{#1}%
\newcommand{\SarielComp}[1]{#1}%
\newcommand{\NotSarielComp}[1]{}%
\newcommand{\IfPrinterVer}[2]{#2}%
\newcommand{\hrefb}[3][black]{\href{#2}{\color{#1}{#3}}}%
\theoremstyle{plain}%
\newtheorem{theorem}{Theorem}[section]
\newtheorem{lemma}[theorem]{Lemma}
\newtheorem{observation}[theorem]{Observation}
\theoremstyle{plain}%
\newtheorem*{remark:unnumbered}[theorem]{Remark}%
\newtheorem{remark}[theorem]{Remark}%
\newtheorem{definition}[theorem]{Definition}
\newtheorem{defn}[theorem]{Definition}
\newcommand{\myqedsymbol}{\rule{2mm}{2mm}}
\theoremstyle{nonumberplain}%
\newtheorem{proof}{Proof:}%
\newcommand{\atgen}{\symbol{'100}}
\newcommand{\SarielThanks}[1]{\thanks{Department of Computer Science;
      University of Illinois; 201 N. Goodwin Avenue; Urbana, IL,
      61801, USA; {\tt sariel\atgen{}illinois.edu}; {\tt
         \url{http://sarielhp.org/}.} #1}}
\newcommand{\SepidehThanks}[1]{%
   \thanks{%
      \texttt{mahabadi@ttic.edu}.%
      #1
   }%
}
\newcommand{\HLink}[2]{\hyperref[#2]{#1~\ref*{#2}}}
\newcommand{\HLinkSuffix}[3]{\hyperref[#2]{#1\ref*{#2}{#3}}}
\newcommand{\figlab}[1]{\label{fig:#1}}
\newcommand{\figref}[1]{\HLink{Figure}{fig:#1}}
\newcommand{\obslab}[1]{\label{observation:#1}}
\newcommand{\obsref}[1]{\HLink{Observation}{observation:#1}}
\newcommand{\itemlab}[1]{\label{item:#1}}
\newcommand{\itemref}[1]{\HLinkSuffix{}{item:#1}{}}
\newcommand{\lemlab}[1]{\label{lemma:#1}}
\newcommand{\lemref}[1]{\HLink{Lemma}{lemma:#1}}%
\newcommand{\remlab}[1]{\label{remark:#1}}
\newcommand{\remref}[1]{\HLink{Remark}{remark:#1}}%
\newcommand{\seclab}[1]{\label{section:#1}}
\newcommand{\secref}[1]{\HLink{Section}{section:#1}}%
\providecommand{\eqlab}[1]{}%
\renewcommand{\eqlab}[1]{\label{equation:#1}}
\newcommand{\remove}[1]{}%
\newcommand{\Set}[2]{\left\{ #1 \;\middle\vert\; #2 \right\}}
\newcommand{\pth}[2][\!]{\mleft({#2}\mright)}%
\newcommand{\pbrcx}[1]{\left[ {#1} \right]}%
\newcommand{\ProbLTR}{\mathbb{P}}%
\newcommand{\Prob}[1]{\mathop{\ProbLTR} \mleft[ #1 \mright]}%
\newcommand{\ProbCond}[2]{\mathop{\ProbLTR}\!\left[%
       #1 \;\middle\vert\; #2 \right]}
\newcommand{\ExChar}{\mathbb{E}}%
\newcommand{\ExSym}{\mathop{\ExChar}}%
\newcommand{\Ex}[2][\!]{\ExSym#1\pbrcx{#2}}
\newcommand{\ceil}[1]{\left\lceil {#1} \right\rceil}
\newcommand{\brc}[1]{\left\{ {#1} \right\}}
\newcommand{\cardin}[1]{\left| {#1} \right|}%
\renewcommand{\th}{th\xspace}
\renewcommand{\Re}{\mathbb{R}}%
\definecolor{blue25}{rgb}{0, 0, 11}
\providecommand{\emphic}[2]{%
   \textcolor{blue25}{%
      \textbf{\emph{#1}}}%
   \index{#2}}
\renewcommand{\emphic}[2]{\textbf{\emph{#1}}}
\providecommand{\emphi}[1]{\emphic{#1}{#1}}
\renewcommand{\emphi}[1]{\emphic{#1}{#1}}
\providecommand{\Mh}[1]{#1}%
\renewcommand{\Re}{\mathbb{R}}
\newcommand{\dist}{\Mh{\mathrm{d}}}%
\newcommand{\PS}{\Mh{P}}%
\newcommand{\DS}{\EuScript{D}}
\newcommand{\eps}{\varepsilon}
\newcommand{\Term}[1]{\textsf{#1}}
\newcommand{\TermI}[1]{\Term{#1}\index{#1@\Term{#1}}}
\newcommand{\LSH}{\TermI{LSH}\xspace}
\newcommand{\MNIST}{\TermI{MNIST}\xspace}%
\newcommand{\NN}{\TermI{NN}\xspace}
\newcommand{\ANN}{\TermI{ANN}\xspace}
\newcommand{\FANN}{\TermI{FANN}\xspace}
\newcommand{\prb}{\Mh{\varphi}}%
\newcommand{\tldO}{\scalerel*{\widetilde{O}}{j^2}}%
\newcommand{\dsS}{\Mh{\mathcal{S}}}%
\newcommand{\dsQ}{\Mh{\mathcal{Q}}}%
\newcommand{\bucket}{\Mh{H}}
\newcommand{\ball}{\mathbb{B}}%
\newcommand{\nbrY}[2]{\Mh{N}\pth{#1, #2}}%
\newcommand{\nNY}[2]{\Mh{n}\pth{#1, #2}}%
\newcommand{\ballY}[2]{\ball\pth{#1, #2}}%
\newcommand{\pNY}[2]{\Mh{\ProbLTR}\pth{#1, #2}}%
\newlist{compactitem}{itemize}{5}%
\setlist[compactitem]{topsep=0pt,itemsep=-1ex,partopsep=1ex,parsep=1ex,%
   label=\ensuremath{\bullet}}%
\newlist{compactenumA}{enumerate}{5}%
\setlist[compactenumA]{topsep=0pt,itemsep=-1ex,partopsep=1ex,parsep=1ex,%
   label=(\Alph*)}%
\newlist{compactenuma}{enumerate}{5}%
\setlist[compactenuma]{topsep=0pt,itemsep=-1ex,partopsep=1ex,parsep=1ex,%
   label=(\alph*)}%
\newlist{compactenumI}{enumerate}{5}%
\setlist[compactenumI]{topsep=0pt,itemsep=-1ex,partopsep=1ex,parsep=1ex,%
   label=(\Roman*)}%
\newlist{compactenumi}{enumerate}{5}%
\setlist[compactenumi]{topsep=0pt,itemsep=-1ex,partopsep=1ex,parsep=1ex,%
   label=(\roman*)}%
\newcommand{\ground}{\Mh{\mathcal{U}}}%
\newcommand{\SC}{\Mh{B}}
\newcommand{\SA}{\ensuremath{\Mh{U}}}%
\newcommand{\epsA}{\Mh{\xi}}%
\newcommand{\BadProb}{\Mh{\gamma}}%
\newcommand{\setA}{\Mh{X}}%
\newcommand{\Family}{\Mh{\mathcal{F}}}%
\newcommand{\FamilyA}{\Mh{\mathcal{G}}}%
\newcommand{\FamilyB}{\Mh{\mathcal{H}}}%
\newcommand{\aprxEps}{\Mh{\,\ts\approx_\eps\,\ts}}%
\newcommand{\MS}{\Mh{\mathcal{M}}}%
\newcommand{\OL}{\Mh{\mathcal{O}}}%
\newcommand{\mOL}{\Mh{m^{}_\mathcal{O}}}%
\newcommand{\Tree}{\Mh{T}}%
\newcommand{\obj}{\Mh{o}}%
\newcommand{\Event}{\mathcal{E}}%
\newcommand{\MM}{\Mh{\mathcal{M}}}%
\newcommand{\ts}{\hspace{0.6pt}}%
\newcommand{\Cardin}[1]{%
   \ts\!\!\left\bracevert%
       \!\ts\!%
       \vphantom{#1}%
       #1%
       \!\!\ts%
   \right\bracevert%
   \!\!\ts%
}
\numberwithin{figure}{section}%
\numberwithin{table}{section}%
\numberwithin{equation}{section}%
\newcommand{\ShowAuthor}[1]{#1}%
\newcommand{\ShowAuthor}[1]{}
\newcommand{\degC}{\Mh{\mathsf{d}}}%
\newcommand{\DegY}[2]{\Mh{\mathsf{D}}_{#1}\pth{#2}}%
\newcommand{\degY}[2]{\Mh{\mathsf{d}}_{#1}\pth{#2}}%
\newcommand{\degX}[1]{\Mh{\mathsf{d}}\pth{#1}}%
\newcommand{\nSets}{\Mh{g}}%
\begin{document}

\title{Near Neighbor: Who is the Fairest of Them All?}

\author{%
   Sariel Har-Peled%
   \SarielThanks{%
      Work on this paper was partially supported by a NSF AF award
      CCF-1907400.  
   }%
   \and%
   Sepideh Mahabadi%
   \SepidehThanks{}%
}%
\maketitle

\begin{abstract}
    In this work we study a fair variant of the near neighbor
    problem. Namely, given a set of $n$ points $P$ and a parameter
    $r$, the goal is to preprocess the points, such that given a query
    point $q$, any point in the $r$-neighborhood of the query, i.e.,
    $\ball(q,r)$, have the same probability of being reported as the
    near
    neighbor.
    
    We show that \LSH based algorithms can be made fair, without a
    significant loss in efficiency. Specifically, we show an algorithm
    that reports a point in the $r$-neighborhood of a query $q$ with
    almost uniform probability.  The query time is proportional to
    \begin{math}
        O\bigl( \mathrm{dns}(q.r) \dsQ(n,c) \bigr),
    \end{math}
    and its space is $O(\dsS(n,c))$, where $\dsQ(n,c)$ and $\dsS(n,c)$
    are the query time and space of an \LSH algorithm for
    $c$-approximate near neighbor, and $\mathrm{dns}(q,r)$ is a
    function of the local density around $q$.
    
    Our approach works more generally for sampling uniformly from a
    sub-collection of sets of a given collection and can be used in a
    few other applications. Finally, we run experiments to show
    performance of our approach on real data.
\end{abstract}


\section{Introduction}

Nowadays, many important decisions, such as college admissions,
offering home loans, or estimating the likelihood of recidivism, rely
on machine learning
algorithms. There is a growing concern about the fairness of the
algorithms and creating bias toward a specific population or
feature~\cite{hps-eosl-16, c-fpdis-17, msp-bdras-16,
   kleinberg2017human}. While algorithms are not inherently biased,
nevertheless, they may amplify the already existing biases in the
data.  Hence, this concern has led to the design of fair algorithms
for many different applications, e.g., \cite{donini2018empirical,
   abdlw-rafc-18, pleiss2017fairness, pmlr-v89-chierichetti19a,
   elzayn2019fair, olfat2018convex, chierichetti2017fair,
   backurs2019scalable, bera2019fair, kleindessner2019guarantees}.

Bias in the data used for training machine learning algorithms is a
monumental challenge in creating fair
algorithms~\cite{huang2007correcting, torralba2011unbiased,
   zafar2017fairness, c-fpdis-17}. Here, we are
interested in a somewhat different problem, of handling the bias
introduced by the data-structures used by such algorithms.
Specifically, data-structures may introduce bias in the data stored in
them, and the way they answer queries, because of the way the data is
stored and how it is being accessed. Such a defect leads to selection
bias by the algorithms using such data-structures. It is natural to
want data-structures that do not introduce a selection bias into the
data when handling queries.

The target as such is to derive data-structures that are
bias-neutral. To this end, imagine a data-structure that can return,
as an answer to a query, an item out of a set of acceptable
answers. The purpose is then to return uniformly a random item out of
the set of acceptable outcomes, without explicitly computing the whole
set of acceptable answers (which might be prohibitively expensive).

Several notions of fairness have been studied, including
{\em group fairness}\footnote{The concept is denoted as statistical
   fairness too, e.g.,~\cite{c-fpdis-17}.}  (where demographics of the
population is preserved in the outcome) and {\em individual fairness}
(where the goal is to treat individuals with similar conditions
similarly) \cite{dwork2012fairness}. In this work, we study the near
neighbor problem from the perspective of individual fairness.

Near Neighbor is a fundamental problem that has applications in many
areas such as machine learning, databases, computer vision,
information retrieval, and many others,
see~\cite{sti-nnmlv-06, ai-nohaa-08} for an overview.  The
problem is formally defined as follows.  Let $(\MS,\dist)$ be a metric
space. Given a set $\PS \subseteq \MS$ of $n$ points and a parameter
$r$, the goal of the \emph{near neighbor} problem is to preprocess
$\PS$, such that for a query point $q \in \MS$, one can report a point
$p \in \PS$, such that $\dist(p,q) \leq r$ if such a point exists. As
all the existing algorithms for the \emph{exact} variant of the
problem have either space or query time that depends exponentially on
the ambient dimension of $\MM$, people have considered the approximate
variant of the problem. In the \emph{$c$-approximate near neighbor}
(\ANN) problem, the algorithm is allowed to report a point $p$ whose
distance to the query is at most $cr$ if a point within distance $r$
of the query exists, for some prespecified constant $c > 1$.

Perhaps the most prominent approach to get an \ANN data structure is
via Locality Sensitive Hashing (\LSH) \cite{im-anntr-98,him-anntr-12},
which leads to sub-linear query time and sub-quadratic space. In
particular, for $\MS=\Re^d$, by using \LSH one can get a query time of
$n^{\rho+o(1)}$ and space $n^{1+\rho+o(1)}$ where for the $L_1$
distance metric $\rho=1/c$ \cite{im-anntr-98,him-anntr-12}, and for
the $L_2$ distance metric $\rho=1/c^2 + o_c(1)$ \cite{ai-nohaa-08}.
The idea of the \LSH method is to hash all the points using several
hash functions that are chosen randomly, with the property that closer
points have a higher probability of collision than the far
points. Therefore, the closer points to a query have a higher
probability of falling into a bucket being probed than far
points. Thus, reporting a random point from a random bucket computed
for the query, produces a distribution that is biased by the distance
to the query: closer points to the query have a higher probability of
being chosen.

\paragraph{When random nearby is better than nearest.}
The bias mentioned above towards nearer points is usually a good
property, but is not always desirable. Indeed, consider the following
scenarios: 
\medskip%
\begin{compactenumI}[leftmargin=0.8cm,itemsep=-0.5ex]
    \item The nearest neighbor might not be the best if the input is
    noisy, and the closest point might be viewed as an
    unrepresentative outlier. Any point in the neighborhood might be
    then considered to be equivalently beneficial. This is to some
    extent why $k$-\NN classification \cite{ell-ca-09} is so effective
    in reducing the effect of noise.
    
    \item However, $k$-\NN works better in many cases if $k$ is large,
    but computing the $k$ nearest-neighbors is quite expensive if $k$
    is large \cite{haaa-spkp-14}. Computing quickly a random nearby
    neighbor can significantly speed-up such classification.
    
    \item We are interested in annonymizing the query
    \cite{a-u4aql-07}, thus returning a random near-neighbor might
    serve as first line of defense in trying to make it harder to
    recover the query. Similarly, one might want to anonymize the
    nearest-neighbor \cite{qa-eppkn-08}, for applications were we are
    interested in a ``typical'' data item close to the query, without
    identifying the nearest item.
    
    \item If one wants to estimate the number of items with a desired
    property within the neighborhood, then the easiest way to do it is
    via uniform random sampling from the neighborhood. In particular,
    this is useful for density estimation \cite{klk-oknnd-12}.
    
    \item Another natural application is simulating a random walk in
    the graph where two items are connected if they are in distance at
    most $r$ from each other. Such random walks are used by some graph
    clustering algorithms \cite{hk-curw-01}.
\end{compactenumI}

\subsection{Results}%
\seclab{results}

Our goal is to solve the near-neighbor problem, and yet be fair among
``all the points'' in the neighborhood. We introduce and study the
\emphi{fair near neighbor} problem -- where the goal is to report any
point of $\nbrY{q}{r}$ with uniform distribution.  That is, report a
point within distance $r$ of the query point with probability of
$\pNY{q}{r} = 1/\nNY{q}{r}$, where
$\nNY{q}{r} = \cardin{\nbrY{q}{r}}$.
Naturally, we study the
approximate fair near neighbor problem, where one can hope to get
efficient data-structures. We have the following results:
\smallskip%
\begin{compactenumI}[leftmargin=0.8cm,itemsep=-0.5ex]
    \item \textbf{Exact neighborhood.}  We present a data structure for reporting
	a neighbor according to an ``almost uniform" distribution
    with space $\dsS(n,c)$, and query time
    $\tldO\bigl( \dsQ(n,c)\cdot \frac{\nNY{q}{cr}}{\nNY{q}{r}}
    \bigr)$, where $\dsS(n,c)$ and $\dsQ(n,c)$ are, respectively, the
    space and query time of the standard $c$-\ANN data structure.
    Note that, the query time of the algorithm might be high if the
    approximate neighborhood of the query is much larger than the
    exact neighborhood.\footnote{As we show, the term
       $\dsQ(n,r)\cdot\frac{\nNY{q}{cr}}{\nNY{q}{r}}$ can also be replaced by
       $\dsQ(n,r)+|\nbrY{q}{cr}\setminus \nbrY{q}{r}|$ which can potentially be
       smaller.}  Guarantees of this data structure hold \emph{with high probability}.
	  See \lemref{final-lem} for the exact statement.
    
    \item \textbf{Approximate neighborhood.}  This formulation reports
    an almost uniform distribution from an approximate neighborhood
    $S$ of the query. We can provide such a data structure that uses
    space $\dsS(n,c)$ and whose query time is $\tldO(\dsQ(n,c))$,
    albeit \emph{in expectation}. See \lemref{approx-neighborhood} for the exact statement.
\end{compactenumI}
\smallskip%
\noindent%
Moreover, the algorithm produces the samples independently of past
queries. In particular, one can assume that an adversary is producing
the set of queries and has full knowledge of the data structure. Even
then the generated samples have the same (almost) uniform guarantees.
Furthermore, we remark that the new sampling
strategy can be embedded in the existing LSH method to achieve
unbiased query results. Finally, we remark that to get a 
distribution that is $(1+\eps)$-uniform (See preliminaries for the definition), 
the dependence of our algorithms on $\eps$ is only $O(\log (1/\eps))$.

Very recently, independent of our work, \cite{aumuller2019fair} also provides a similar definition for the fair near neighbor problem.

\bigskip\noindent%
\textbf{Experiments.} %
Finally, we compare the performance of our algorithm with the
algorithm that uniformly picks a bucket and reports a random point, on
the \MNIST, SIFT10K, and GloVe data sets. Our empirical results show
that while the standard \LSH algorithm fails to fairly sample a point
in the neighborhood of the query, our algorithm produces an empirical
distribution which is much closer to the uniform distribution: it
improves the statistical distance to the uniform distribution by a
significant factor.

\subsection{Data-structure: Sampling from a sub-collection of sets}

We first study the more generic problem -- given a collection
$\Family$ of sets from a universe of $n$ elements, a query is a
sub-collection $\FamilyA\subseteq \Family$ of these sets and the goal
is to sample (almost) uniformly from the union of the sets in this
sub-collection. We do this by first sampling a set $X$ in the
sub-collection $\FamilyA$ proportional to the size of the set $X$, and
then sampling an element $x\in X$ uniformly at random. This produces a
distribution on elements in the sub-collection such that any element
$x$ is chosen proportional to its degree $\degX{x}$ (i.e., the number
of sets $X\in \FamilyA$ that $x\in X$). Therefore, we can use
rejection sampling and only report $x$ with probability $1/\degX{x}$.

We can compute the degree by checking if $x\in X$ for all sets
$X\in \FamilyA$ in the collection, which takes time proportional to
$\nSets = |\FamilyA|$. Also because of the rejection sampling, we
might need to repeat this process $O(\deg_{avg})$ times which can be
as large as $\nSets$. This leads to expected runtime $O(\nSets^2 )$ to
generate a single sample, see \lemref{q:exact}.

As a first improvement, we \emph{approximate} the degree using
standard sampling techniques, which can be done in time
$\tldO({\nSets}/{\degX{x}})$. Although this can still be large for
small degrees, however, those small values will also be rejected with
a smaller probability. Using this, we can bound the runtime of a query
by $O( \eps^{-2} \nSets \log n )$, see \lemref{almost-uniform} (the
sampling is $(1\pm \eps)$-uniform), where $n$ is (roughly) the input
size.

Our second improvement, which the authors believe to be quite
interesting, follows by simulating this rejection sampling
directly. This follows by first introducing a heuristic to approximate
the degree, and then shortcutting it to get the desired simulation.
\secref{almost:uniform} describes this modified algorithm. In
particular, one can get uniform sampling with high
probability. Specifically, one can sample uniformly in expected
$O( \nSets \log \BadProb^{-1})$ time, where the sampling succeeds with
probability $\geq 1- \BadProb$. Alternatively, one can sample
$(1\pm\eps)$-uniformly, with the expected running time being
$O\bigl( \nSets \log (n/\eps) \bigr)$. This is a significant
improvement, in both simplicity and dependency on $\eps$, over the
previous scheme.

We also show how to modify that data-structure to handle outliers, as
it is the case for \LSH, as the sampling algorithm needs to ignore
such points once they are reported as a sample.

\bigskip\noindent%
\textbf{Applications.}  Here are a few examples of applications of
such a data-structure (for sampling from a union of sets): \smallskip%
\begin{compactenumA}
    \item Given a subset $\setA$ of vertices in the graph, randomly
    pick (with uniform distribution) a neighbor to one of the vertices
    of $\setA$. This can be used in simulating disease spread
    \cite{ke-nem-05}. 
    
    \smallskip%
    \item Here, we use a variant of this data-structure to implement
    the fair \ANN.

    \smallskip%
    \item Uniform sampling for range searching \cite{hqt-irs-14,
       aw-irsr-17, ap-irsra-19}. Indeed, consider a set  of
    points, stored in a data-structure for range
    queries. Using the above, we can support sampling from the points
    reported by several queries, even if the reported answers are not
    disjoint.
\end{compactenumA}
\smallskip%
Being unaware of any previous work on this problem, we believe this
data-structure is of independent interest.

\subsection{Paper organization}%

We describe some basic sampling and approximation tools in
\secref{prelims}. We describe the sampling from union of set
data-structure in \secref{DS}.  The application of the data-structure
to \LSH is described in \secref{s:f:nn}.  The experiments are
described in \secref{experiments}.


\section{Preliminaries}
\seclab{prelims}

\paragraph*{Neighborhood, fair nearest-neighbor, %
   and approximate neighborhood.}

Let $(\MS,\dist)$ be a metric space and let $\PS \subseteq \MS$ be a
set of $n$ points.  Let
$\ball(c,r) = \Set{x\in \MS}{ \dist(c,x)\leq r}$ be the (close) ball
of radius $r$ around a point $c\in \MS$, and let
$\nbrY{c}{r} = \ball(c,r)\cap \PS$ be the \emphi{$r$-neighborhood} of
$c$ in $\PS$. The \emph{size} of the $r$-neighborhood is
$\nNY{c}{r} = \cardin{ \nbrY{c}{r} }$.

\begin{definition}[\FANN]
    Given a data set $\PS \subseteq \MM$ of $n$ points and a parameter
    $r$, the goal is to preprocess $\PS$ such that for a given query
    $q$, one reports each point $p\in \nbrY{q}{r}$ with probability
    $\mu_p$ where $\mu$ is an approximately uniform probability
    distribution:
    $\pNY{q}{r}/ (1+\eps) \leq \mu_p \leq (1+\eps)\pNY{q}{r}$, where
    $\pNY{q}{r} = 1/\nNY{q}{r}$.
\end{definition}

\begin{definition}[\FANN with approximate neighborhood]
    Given a data set $\PS \subseteq \MM$ of $n$ points and a parameter
    $r$, the goal is to preprocess them such that for a given query
    $q$, one reports each point $p\in S$ with probability $\mu_p$
    where $\prb/(1+\eps) \leq \mu_p \leq (1+\eps)\prb$, where $S$ is a
    point set such that
    $\nbrY{q}{r}\subseteq S \subseteq \nbrY{q}{cr}$, and
    $\prb = 1/|S|$.
\end{definition}

\noindent%
\textbf{Set representation.} %
Let $\ground$ be an underlying ground set of $n$ objects (i.e.,
elements). In this paper, we deal with sets of objects. Assume that
such a set $\setA\subseteq \ground$ is stored in some reasonable
data-structure, where one can insert delete, or query an object in
constant time. Querying for an object $\obj\in \ground$, requires
deciding if $\obj \in \setA$. Such a representation of a set is
straightforward to implement using an array to store the objects, and
a hash table.  This representation allows random access to the
elements in the set, or uniform sampling from the set.

If hashing is not feasible, one can just use a standard dictionary
data-structure -- this would slow down the operations by a logarithmic
factor.

\bigskip\noindent%
\textbf{Subset size estimation.}  We need the following standard
estimation tool, \cite[Lemma 2.8]{bhrrs-eeiso-17}.

\begin{lemma}
    \lemlab{est:set}%
    Consider two sets $\SC \subseteq \SA$, where $n = \cardin{\SA}$.
    Let $\epsA, \BadProb \in (0,1)$ be parameters, such that
    $\BadProb < 1/ \log n$. Assume that one is given an access to a
    membership oracle that, given an element $x \in \SA$, returns
    whether or not $x \in \SC$. Then, one can compute an estimate $s$,
    such that
    $(1-\epsA)\cardin{\SC}\leq s \leq (1+\epsA)\cardin{\SC}$, and
    computing this estimates requires
    $O( (n/\cardin{\SC}) \epsA^{-2} \log \BadProb^{-1})$ oracle
    queries. The returned estimate is correct with probability
    $\geq 1 - \BadProb$.
\end{lemma}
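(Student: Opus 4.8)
The plan is to prove \lemref{est:set} by the standard ``sample-and-count'' estimator, wrapped in a doubling loop so that the number of oracle queries adapts automatically to the unknown density $p = \cardin{\SC}/n$. First I would note that drawing a uniformly random $x \in \SA$ and asking the oracle whether $x \in \SC$ produces an independent $\mathrm{Bernoulli}(p)$ trial, with $p \ge 1/n$ since $\cardin{\SC}\ge 1$; estimating $\cardin{\SC}$ to within a $(1\pm\epsA)$ factor is then the same as estimating $p$ to within a $(1\pm\epsA)$ factor and scaling by $n$.

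Next I would describe the algorithm: fix a threshold $\tau = \Theta\pth{\epsA^{-2}\log\BadProb^{-1}}$, and for rounds $i = 1,2,\dots$ draw $N_i = 2^i$ fresh uniform samples, query all of them, and let $Z_i$ be the number of them landing in $\SC$. Halt at the first round with $Z_i \ge \tau$ and output $s = Z_i\,n/N_i$. Since $p \ge 1/n$, at most $R = O(\log n)$ rounds are ever reached.

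For the analysis I would rely on two multiplicative Chernoff bounds, asserted for every round $i \le R$: (i) if $pN_i \ge C\,\epsA^{-2}\log(R/\BadProb)$ then $\cardin{Z_i - pN_i}\le \epsA\,pN_i$; and (ii) if $pN_i$ is below a suitable constant fraction of $\tau$, then $Z_i < \tau$. For a large enough constant in $\tau$ (and in $C$), each such event fails with probability at most $\BadProb/(2R)$, so a union bound over the at most $2R$ events gives that they all hold simultaneously with probability $\ge 1-\BadProb$. On that good event: when the loop halts we have $Z_i \ge \tau$, which by (ii) forces $pN_i = \Omega(\tau) = \Omega(\epsA^{-2}\log(R/\BadProb))$, so (i) applies and $s = Z_i n/N_i \in [(1-\epsA)\cardin{\SC},\,(1+\epsA)\cardin{\SC}]$; conversely, once $pN_i \ge 2\tau$, part (i) gives $Z_i \ge pN_i/2 \ge \tau$, so the loop has already halted, whence the halting round $i^\star$ has $N_{i^\star} = O(\tau/p)$. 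The total number of oracle queries is then $\sum_{i\le i^\star} N_i \le 2N_{i^\star} = O(\tau/p) = O\bigl((n/\cardin{\SC})\,\epsA^{-2}\log(R/\BadProb)\bigr)$. This is exactly where the hypothesis $\BadProb < 1/\log n$ enters: it gives $\log\BadProb^{-1} > \log\log n = \log R - O(1)$, hence $\log(R/\BadProb) = O(\log\BadProb^{-1})$, and the query bound collapses to the claimed $O\bigl((n/\cardin{\SC})\,\epsA^{-2}\log\BadProb^{-1}\bigr)$.

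The step I expect to be the main obstacle is precisely this interplay in the analysis: the halting time is data-dependent and correlated with the counts $Z_i$, so one cannot simply invoke Chernoff ``at the correct sample size''. Bounds (i)--(ii) are tailored to break the circularity: (ii) certifies that whenever the loop halts it has already collected enough samples for (i) to be meaningful, while (i) simultaneously certifies that the output is $(1\pm\epsA)$-accurate and that the loop halts within $O(\log n)$ rounds. The remaining pieces --- pinning down the Chernoff constants and the geometric-series bound on the query count --- are routine. (In fact, this is exactly \cite[Lemma 2.8]{bhrrs-eeiso-17}, which one may also simply cite; a negative-binomial variant that counts until the $\tau$-th hit also works and avoids the round-based union bound.)
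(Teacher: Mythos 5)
Your proof is sound, but note how it relates to the paper: the paper does not prove this lemma at all -- it is imported as a black box, with a bare citation to \cite[Lemma 2.8]{bhrrs-eeiso-17}. So what you have done is supply a self-contained derivation of the cited tool, via the standard doubling (``guess the density'') estimator, and the one genuinely delicate point -- that the stopping round is data-dependent and correlated with the counts $Z_i$ -- is handled correctly by defining a single good event over all rounds, with bound (ii) certifying that any halting round already has $p N_i = \Omega(\tau)$ so that bound (i) applies to it. Three small repairs would make it airtight. First, ``at most $R=O(\log n)$ rounds'' does not follow from $p\geq 1/n$ alone: halting needs $N_{i^\star}=\Theta(\tau/p)$, so the number of rounds is $O(\log (n\tau))$, which exceeds $O(\log n)$ if $\epsA$ or $\BadProb$ is absurdly small; fix this by observing that when $\epsA^{-2}\log\BadProb^{-1}\geq n$ one may simply query every element of $\SA$ (exact count, within the stated budget), so one can assume $\tau\leq n$, after which $R=O(\log n)$ and your use of the hypothesis $\BadProb<1/\log n$ to absorb $\log R$ into $\log \BadProb^{-1}$ is exactly right. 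Second, the step ``(i) gives $Z_i\geq pN_i/2\geq \tau$ once $pN_i\geq 2\tau$'' needs $\epsA\leq 1/2$; for larger $\epsA$ run the algorithm with accuracy $1/2$, which only changes constants. Third, you implicitly assume $\cardin{\SC}\geq 1$ (as does the stated query bound $O((n/\cardin{\SC})\epsA^{-2}\log\BadProb^{-1})$, which is vacuous for $\SC=\emptyset$). None of these affects the structure of the argument, and as you note, one could equally well use a negative-binomial (sample-until-$\tau$-hits) estimator, which is closer in spirit to the proof in the cited source and avoids the round-by-round union bound.
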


\bigskip\noindent%
\textbf{Weighted sampling.}  We need the following standard
data-structure for weighted sampling.

\begin{lemma}\lemlab{ds-tree}
    Given a set of objects $\FamilyB = \brc{ \obj_1, \ldots, \obj_t}$,
    with associated weights $w_1,\ldots, w_t$, one can preprocess them
    in $O(t)$ time, such that one can sample an object out of
    $\FamilyB$.  The probability of an object $\obj_i$ to be sampled
    is $w_i / \sum_{j=1}^t w_j$. In addition the data-structure
    supports updates to the weights. An update or sample operation
    takes $O( \log t)$ time.
\end{lemma}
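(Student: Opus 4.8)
The plan is to store the objects at the leaves of a \emph{static} balanced binary tree and to maintain, at every internal node, the sum of the weights of the leaves below it. Concretely, build a binary tree $\Tree$ with exactly $t$ leaves and depth $O(\log t)$ (e.g.\ an almost-complete binary tree), and associate the $i$-th leaf with the object $\obj_i$, storing the weight $w_i$ there. For every internal node $v$, store $W(v) = \sum_{i \,:\, \obj_i \in \mathrm{leaves}(v)} w_i$, the total weight of the leaves in the subtree rooted at $v$. All these values are computed bottom-up in a single postorder traversal, each node costing $O(1)$, for a total of $O(t)$ preprocessing time. In particular the root stores $W := \sum_{j=1}^{t} w_j$.

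To sample, descend from the root: at an internal node $v$ with children $v_L, v_R$, move to $v_L$ with probability $W(v_L)/W(v)$ and to $v_R$ with probability $W(v_R)/W(v)$ — these sum to $1$ since $W(v) = W(v_L) + W(v_R)$ by the stored invariant. Stop when a leaf is reached and report the object it holds. A routine telescoping argument shows that the probability of reaching the leaf of $\obj_i$ is the product of the branch probabilities along the unique root-to-leaf path to that leaf, and this product collapses to $w_i / W$, exactly as required. The descent visits $O(\log t)$ nodes, each handled in $O(1)$ time, so a sample takes $O(\log t)$ time.

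To update the weight of $\obj_i$ from $w_i$ to $w_i'$, walk the unique root-to-leaf path of $\obj_i$: set the leaf value to $w_i'$ and add $w_i' - w_i$ to $W(v)$ for every ancestor $v$ on this path. No other stored sum changes, so the invariant $W(v) = \sum_{\obj_i \in \mathrm{leaves}(v)} w_i$ is preserved, and the work is $O(\log t)$ since the path has $O(\log t)$ nodes.

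There is essentially no substantive obstacle here — the construction is standard (a segment/Fenwick-style tree), and the only points worth a remark are that the tree shape is fixed and never rebalanced, which is fine because the operations are weight updates rather than insertions or deletions of objects, and the degenerate case $W = 0$, which we may exclude by assumption (in the intended applications the total weight is always positive) or handle by leaving the sample undefined. Hence the proof is just the bookkeeping above together with the telescoping identity for the sampling probability.
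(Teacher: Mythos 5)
Your proof is correct and follows essentially the same approach as the paper: a balanced binary tree with objects at the leaves, subtree-weight sums at internal nodes, sampling by a weighted root-to-leaf descent, and updates by refreshing the sums along a single root-to-leaf path.
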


\begin{proof}
    Build a balanced binary tree $\Tree$, where the objects of
    $\FamilyA$ are stored in the leaves.  Every internal node $u$ of
    $\Tree$, also maintains the total weight $w(u)$ of the objects in
    its subtree. The tree $\Tree$ has height $O( \log t)$, and weight
    updates can be carried out in $O( \log t)$ time, by updating the
    path from the root to the leaf storing the relevant object.

    Sampling is now done as follows -- we start the traversal from the
    root. At each stage, when being at node $u$, the algorithm
    considers the two children $u_1,u_2$. It continues to $u_1$ with
    probability $w(u_1)/ w(u)$, and otherwise it continues into
    $u_2$. The object sampled is the one in the leaf that this
    traversal ends up at.
\end{proof}%


\section{Data-structure: Sampling from the %
   union of sets}
\seclab{DS}

\textbf{The problem.}  Assume you are given a data-structure that
contains a large collection $\Family$ of sets of objects. The sets in
$\Family$ are not necessarily disjoint. The task is to preprocess the
data-structure, such that given a sub-collection
$\FamilyA \subseteq \Family$ of the sets, one can quickly pick
uniformly at random an object from the set
\begin{math}
    {\textstyle \bigcup} \FamilyA%
    :=%
    \bigcup_{\setA \in \FamilyA} \setA.
\end{math}

\bigskip\noindent%
\textbf{Naive solution.}  The naive solution is to take the sets under
consideration (in $\FamilyA$), compute their union, and sample
directly from the union set ${\textstyle \bigcup} \FamilyA$. Our
purpose is to do (much) better -- in particular, the goal is to get a
query time that depends logarithmically on the total size of all sets
in $\FamilyA$.

\subsection{Preprocessing}

For each set $\setA \in \Family$, we build the set representation
mentioned in the preliminaries section.  In addition, we assume that
each set is stored in a data-structure that enables easy random access
or uniform sampling on this set (for example, store each set in its
own array). Thus, for each set $\setA$, and an element, we can decide
if the element is in $\setA$ in constant time.

\subsection{Uniform sampling via exact degree computation}
\seclab{uniform}

The query is a family $\FamilyA \subseteq \Family$, and define
$m = \Cardin{\FamilyA} := \sum_{\setA \in \FamilyA} \cardin{\setA}$
(which should be distinguished from $\nSets = \cardin{\FamilyA}$ and
from $n= \cardin{\bigcup \FamilyA}$). The \emphi{degree} of an element
$x \in \bigcup \FamilyA$, is the number of sets of $\FamilyA$ that
contains it -- that is,
$\degY{\FamilyA}{x} = \cardin{\DegY{\FamilyA}{x}}$, where
$ \DegY{\FamilyA}{x}%
=%
\Set{ \setA \in \FamilyA}{ x \in \setA }.  $
The algorithm repeatedly does the following:
\begin{compactenumI}
    \smallskip%
    \item \itemlab{s:sample}%
    Picks one set from $\FamilyA$ with probabilities proportional to
    their sizes. That is, a set $\setA \in \FamilyA$ is picked with
    probability $\cardin{\setA} / m$.
    
    \item \itemlab{b:sample}%
    It picks an element $x \in \setA$ uniformly at random.
    
    \item Computes the degree $\degC = \degY{\FamilyA}{x}$.
    
    \item Outputs $x$ and stop with probability $1/\degC$. Otherwise,
    continues to the next iteration.
\end{compactenumI}

\begin{lemma}
    \lemlab{q:exact}%
    Let $n = \cardin{\bigcup\FamilyA}$ and
    $\nSets = \cardin{\FamilyA}$.  The above algorithm samples an
    element $x \in \bigcup \FamilyA$ according to the uniform
    distribution. The algorithm takes in expectation
    $O( \nSets m/n ) = O( \nSets^2 )$ time. The query time is takes
    $O(\nSets^2 \log n)$ with high probability.
\end{lemma}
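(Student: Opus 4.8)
The plan is to split the argument into correctness of the reported distribution and a bound on the number of iterations. \textbf{Correctness.} First I would fix an element $x \in \bigcup\FamilyA$ and analyze a single iteration. The pair of sampling steps \itemref{s:sample}--\itemref{b:sample} produces $x$ exactly when the algorithm draws some set $\setA$ containing $x$ and then draws $x$ out of that set, which has probability $\sum_{\setA \in \DegY{\FamilyA}{x}} \frac{\cardin{\setA}}{m}\cdot\frac{1}{\cardin{\setA}} = \frac{\degY{\FamilyA}{x}}{m}$. Multiplying by the acceptance probability $1/\degY{\FamilyA}{x}$, the chance that a given iteration stops and outputs $x$ is exactly $1/m$, which is the same for every $x$. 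Summing over the $n$ elements of $\bigcup\FamilyA$, an iteration stops with probability $n/m$, and conditioned on stopping the reported element is uniform. Since the iterations are independent and identically distributed, the element finally returned is uniform over $\bigcup\FamilyA$.

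\textbf{Running time.} Next I would bound the cost of a single iteration. Building the weighted-sampling structure of \lemref{ds-tree} once over $\FamilyA$ costs $O(\nSets)$; thereafter step \itemref{s:sample} costs $O(\log \nSets)$, step \itemref{b:sample} costs $O(1)$ using the array representation of $\setA$, and computing $\degC = \degY{\FamilyA}{x}$ costs $O(\nSets)$ by testing in constant time each whether $x$ lies in each of the $\nSets$ sets of $\FamilyA$. Hence one iteration costs $O(\nSets)$. The number of iterations is geometric with success probability $n/m$, so its expectation is $m/n$, giving expected total time $O(\nSets \cdot m/n)$. Since every element of $\bigcup\FamilyA$ lies in at most $\nSets = \cardin{\FamilyA}$ of the sets, $m = \sum_{x\in\bigcup\FamilyA}\degY{\FamilyA}{x} \le \nSets n$, so $m/n \le \nSets$ and the expected running time is $O(\nSets^2)$.

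\textbf{High-probability bound.} For the tail, the probability that the algorithm needs more than $t$ iterations is $(1-n/m)^t \le e^{-tn/m} \le e^{-t/\nSets}$, again using $m \le \nSets n$. Taking $t = \Theta(\nSets\log n)$ makes this at most $n^{-O(1)}$, so with high probability the algorithm runs for $O(\nSets\log n)$ iterations, i.e., in $O(\nSets^2\log n)$ time.

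I do not expect a genuine obstacle here; the only points that need care are checking that the weighted-sampling structure is rebuilt only once per query (so that the per-iteration cost is truly $O(\nSets)$), noting $m \ge n$ so the one-time $O(\nSets)$ preprocessing is absorbed, and invoking the uniform inequality $m \le \nSets n$ to pass from the instance-sensitive bound $O(\nSets m/n)$ to the stated worst-case bounds $O(\nSets^2)$ and $O(\nSets^2\log n)$.
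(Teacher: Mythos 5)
Your proposal is correct and follows essentially the same argument as the paper: each round outputs a fixed element with probability exactly $1/m$ (hence uniformity), the number of rounds is geometric with success probability $n/m$, each round costs $O(\nSets)$ dominated by the exact degree computation, and the bound $\degX{y}\leq \nSets$ (equivalently $m\leq \nSets n$) converts $O(\nSets m/n)$ into $O(\nSets^2)$, with the geometric tail giving the $O(\nSets^2\log n)$ high-probability bound. Your added remarks about building the weighted-sampling structure once per query and absorbing that cost are fine implementation details the paper leaves implicit.
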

\begin{proof}
    Let $m = \Cardin{\FamilyA}$.  Observe that an element
    $x \in \bigcup \FamilyA$ is picked by step \itemref{b:sample} with
    probability $\alpha = \degX{x}/m$. The element $x$ is output with
    probability $\beta = 1/ \degX{x}$. As such, the probability of $x$
    to be output by the algorithm in this round is
    $\alpha \beta = 1/ \Cardin{\FamilyA}$. This implies that the
    output distribution is uniform on all the elements of
    $\bigcup \FamilyA$.
    
    The probability of success in a round is $n/m$, which implies that
    in expectation $m/n$ rounds are used, and with high probability
    $O((m/n) \log n)$ rounds. Computing the degree
    $\degY{\FamilyA}{x}$ takes $O( \cardin{\FamilyA})$ time, which
    implies the first bound on the running time. As for the second
    bound, observe that an element can appear only once in each set of
    $\FamilyA$, which readily implies that
    $\degX{y} \leq \cardin{\FamilyA}$, for all
    $y \in \bigcup \FamilyA$.
\end{proof}%

\subsection{Almost uniform sampling via degree approximation}

The bottleneck in the above algorithm is computing the degree of an
element. We replace this by an approximation.

\begin{defn}
    Given two positive real numbers $x$ and $y$, and a parameter
    $\eps \in (0,1)$, the numbers $x$ and $y$ are
    \emphi{$\eps$-approximation} of each other, denoted by
    $x \aprxEps y$, if $x/(1+\eps) \leq y \leq x(1+\eps)$ and
    $y/(1+\eps) \leq x \leq
    y(1+\eps)$.
\end{defn}
In the approximate version, given an item $x \in \bigcup \FamilyA$, we
can approximate its degree and get an improved runtime for the
algorithm.

\begin{lemma}%
    \lemlab{almost-uniform}%
    The input is a family of sets $\Family$ that one can preprocess in
    linear time.  Let $\FamilyA\subseteq\Family$ be a sub-family and
    let $n = \cardin{\bigcup\FamilyA}$, $\nSets = \cardin{\FamilyA}$,
    and $\eps \in (0,1)$ be a parameter.  One can sample an element
    $x \in \bigcup \FamilyA$ with almost uniform probability
    distribution.  Specifically, the probability of an element to be
    output is $\aprxEps 1/n$. After linear time preprocessing, the
    query time is $O\pth{ \nSets \eps^{-2} \log n}$, in expectation,
    and the query succeeds with high probability.
\end{lemma}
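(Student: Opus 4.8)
The plan is to reuse the exact-sampling scheme of \lemref{q:exact} almost verbatim, and only replace its one expensive line --- computing $\degY{\FamilyA}{x}$ exactly in $\Theta(\nSets)$ time --- by an \emph{approximate} degree computation via \lemref{est:set}. Given the element $x \in \bigcup\FamilyA$ produced by the set-then-element step (which, exactly as in \lemref{q:exact}, picks $x$ with probability $\degY{\FamilyA}{x}/m$, where $m = \Cardin{\FamilyA}$), I would invoke \lemref{est:set} with $\SA = \FamilyA$ (so of size $\nSets$), $\SC = \DegY{\FamilyA}{x}$ (of size $\degY{\FamilyA}{x}$), and the $O(1)$-time membership oracle ``is $x \in \setA$?''. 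With parameters $\epsA = \eps/10$ and $\BadProb = \eps/(10\nSets)$ --- note $\BadProb < 1/\log \nSets$ since $\eps<1$, as \lemref{est:set} requires --- this returns, in $O\bigl((\nSets/\degY{\FamilyA}{x})\,\epsA^{-2}\log\BadProb^{-1}\bigr)$ time, an estimate that, with probability $\ge 1-\BadProb$, is within a $(1\pm\epsA)$ factor of $\degY{\FamilyA}{x}$. I would clamp it into $[1,\nSets]$ (harmless, since the true degree lies there), call the result $\widehat{\degC}$, and output $x$ with probability $\min(1, 1/\widehat{\degC})$, otherwise continuing to the next round.

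\textbf{Correctness.}
The crux is that every element has essentially the same probability of being output in a single round. That probability is $q_x = \tfrac{\degY{\FamilyA}{x}}{m}\,\Ex{p_x}$, where $p_x = \min(1, 1/\widehat{\degC})$ is the (random) acceptance probability. On the good event (probability $\ge 1-\BadProb$), a short case analysis --- separating $\degY{\FamilyA}{x}=1$ from $\degY{\FamilyA}{x}\ge 2$, noting the clamp only helps --- gives $p_x \in \bigl[\tfrac{1}{(1+\epsA)\degY{\FamilyA}{x}},\ \tfrac{1+\epsA}{\degY{\FamilyA}{x}}\bigr]$; on the bad event the clamp still forces $p_x \in [1/\nSets, 1]$. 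Hence $q_x \in \bigl[\tfrac{1-\BadProb}{(1+\epsA)m},\ \tfrac{1+\epsA}{m}+\tfrac{\degY{\FamilyA}{x}\,\BadProb}{m}\bigr]$, and since $\degY{\FamilyA}{x}\le\nSets$ and $\nSets\BadProb=\eps/10$, both endpoints equal $\tfrac1m$ up to a $1+O(\eps)$ factor, \emph{for every $x$}. As the algorithm eventually reports $x$ with probability $q_x/\sum_y q_y$, each element is output with probability $\aprxEps 1/n$ (after rescaling $\eps$ by a constant), as claimed --- and unconditionally, the clamp being exactly what keeps a failed estimate from contributing more than an additive $\nSets\BadProb/m = O(\eps/m)$ to $q_x$.

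\textbf{Running time and success probability.}
Building the weighted-sampling structure of \lemref{ds-tree} on $\FamilyA$ costs $O(\nSets)$ at query time, and thereafter each round costs $O(\log\nSets)$ for the sampling steps plus the estimation cost. Since $x$ is picked with probability $\degY{\FamilyA}{x}/m$, the expected estimation cost per round is $\sum_{x}\tfrac{\degY{\FamilyA}{x}}{m}\cdot O\bigl(\tfrac{\nSets}{\degY{\FamilyA}{x}}\epsA^{-2}\log\BadProb^{-1}\bigr) = O\bigl(\tfrac{n\nSets}{m}\epsA^{-2}\log\BadProb^{-1}\bigr)$ --- the $\degY{\FamilyA}{x}$ factors cancel. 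The per-round success probability is $\sum_x q_x = \Theta(n/m)$, so the expected number of rounds is $\Theta(m/n)$; since each round's randomness is independent of whether that round is reached, multiplying gives expected total time $O\bigl(\nSets\epsA^{-2}\log\BadProb^{-1}\bigr) = O(\nSets\eps^{-2}\log n)$ after substituting the parameters and absorbing $\log(\nSets/\eps)$ into $\log n$. For the high-probability claim, the number of rounds is dominated by a geometric variable and so is $O((m/n)\log n)$ whp, while each round costs $O(\nSets\epsA^{-2}\log\BadProb^{-1})$ in the worst case (using $\degY{\FamilyA}{x}\ge1$), so a routine tail bound controls the running time.

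\textbf{Main obstacle.}
The one genuinely delicate point is the interaction between the estimator's failure probability $\BadProb$ and the number of rejection rounds, which can be $\Theta(m/n)$ and hence large: a careless choice of $\BadProb$ would let rare bad estimates accumulate across rounds and spoil near-uniformity of the $q_x$'s. The two-part remedy --- clamp every estimate into $[1,\nSets]$, so a bad round perturbs $q_x$ by only an additive $\nSets\BadProb/m$, and take $\BadProb = \Theta(\eps/\nSets)$ --- makes that perturbation $O(\eps/m)$ while keeping $\log\BadProb^{-1} = O(\log(\nSets/\eps))$, so the query time remains $O(\nSets\eps^{-2}\log n)$.
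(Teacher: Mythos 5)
Your proposal follows essentially the same route as the paper: sample a set proportionally to its size, sample an element uniformly, accept with probability the reciprocal of the degree estimate obtained from \lemref{est:set}, and bound the expected query time by observing that the work done in a round is independent of whether that round is reached. The only deviation is your parameter choice $\BadProb=\Theta(\eps/\nSets)$ together with clamping, which makes the near-uniformity guarantee unconditional but yields a per-round $\log(\nSets/\eps)$ factor that matches the stated $O(\nSets\eps^{-2}\log n)$ bound only when $\log(\nSets/\eps)=O(\log n)$, whereas the paper takes the estimator's failure probability polynomially small in $n$, so each estimate costs $O\bigl((\nSets/\degX{x})\eps^{-2}\log n\bigr)$ and the correctness guarantee is stated to hold with high probability.
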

\begin{proof}
    Let $m = \Cardin{\FamilyA}$.  Since
    $\degX{x} = \cardin{\DegY{\FamilyA}{x}}$, it follows that we need
    to approximate the size of $\DegY{\FamilyA}{x}$ in
    $\FamilyA$. Given a set $\setA \in \FamilyA$, we can in constant
    time check if $x \in \setA$, and as such decide if
    $\setA \in \DegY{\FamilyA}{x}$. It follows that we can apply the
    algorithm of \lemref{est:set}, which requires
    \begin{math}
        W(x)%
        =%
        O\bigl( \tfrac{\nSets}{ \degX{x}} \eps^{-2} \log n\bigr)
    \end{math}
    time, where the algorithm succeeds with high probability. The
    query algorithm is the same as before, except that it uses the
    estimated degree.

    For $x \in \bigcup \FamilyA$, let $\Event_x$ be the event that the
    element $x$ is picked for estimation in a round, and let
    $\Event_x'$ be the event that it was actually output in that
    round.  Clearly, we have $\ProbCond{\Event_x' }{\Event_x} = 1/d$,
    where $d$ is the degree estimate of $x$. Since
    $d \aprxEps \degX{x}$ (with high probability), it follows that
    $\ProbCond{\Event_x' }{\Event_x} \aprxEps 1/\degX{x}$. Since there
    are $\degX{x}$ copies of $x$ in $\FamilyA$, and the element for
    estimation is picked uniformly from the sets of $\FamilyA$, it
    follows that the probability of any element
    $x \in \bigcup \FamilyA$ to be output in a round is
    \begin{equation*}
        \Prob{\Event_x'}%
        = 
        \ProbCond{\Event_x'}{\Event_x}
        \Prob{\Event_x}
        = 
        \ProbCond{\Event_x'}{\Event_x}
        \frac{ \degX{x}}{ m}  \aprxEps 1/m,
    \end{equation*}
    as $\Event_x' \subseteq \Event_x$.  As such, the probability of
    the algorithm terminating in a round is
    \begin{math}
        \alpha %
        = %
        \sum_{x \in \bigcup \FamilyA} \Prob{\Event_x'}%
        \aprxEps n/m%
        \geq%
        n/2m.
    \end{math}
    As for the expected amount of work in each round, observe that it
    is proportional to
    \begin{equation*}
        W%
        =%
        \sum_{x \in \bigcup \FamilyA} \Prob{\Event_x} W(x)
        =%
        \sum_{x \in \bigcup \FamilyA}
        \frac{\degX{x}}{m}
        \frac{ \nSets }{ \eps^2 \degX{x}} \log n
        =%
        O\pth{ \frac{ n \nSets}{m} \eps^{-2} \log n}.
    \end{equation*}
    
    Intuitively, since the expected amount of work in each iteration
    is $W$, and the expected number of rounds is $1/\alpha$, the
    expected running time is $O( W / \alpha)$. This argument is not
    quite right, as the amount of work in each round effects the
    probability of the algorithm to terminate in the round (i.e., the
    two variables are not independent). We continue with a bit more
    care -- let $L_i$ be the running time in the $i$\th round of the
    algorithm if it was to do an $i$\th iteration (i.e., think about a
    version of the algorithm that skips the experiment in the end of
    the iteration to decide whether it is going to stop), and let
    $Y_i$ be a random variable that is $1$ if the (original) algorithm
    had not stopped at the end of the first $i$ iterations of the
    algorithm.
    
    By the above, we have that
    $y_i = \Prob{Y_{i}=1} = \ProbCond{Y_{i}=1}{Y_{i-1}
       =1}\Prob{Y_{i-1}=1} \leq (1-\alpha)y_{i-1} \leq (1-\alpha)^i$,
    and $\Ex{L_i} = O(W)$. Importantly, $L_i$ and $Y_{i-1}$ are
    independent (while $L_i$ and $Y_i$ are dependent). We clearly have
    that the running time of the algorithm is
    $O \bigl( \sum_{i=1}^\infty Y_{i-1}L_i \bigr)$ (here, we define
    $Y_0 =1$). Thus, the expected running time of the algorithm is
    proportional to
    \begin{align*}
      \Ex{ \Bigl. \smash{\sum_{i} Y_{i-1}L_i }}%
      &=%
        \sum_{i} \Ex{Y_{i-1}L_i }
        =%
        \sum_{i} \Ex{Y_{i-1}} \Ex{L_i }
        \leq%
        W \sum_{i} y_{i-1}
        \leq%
        W \sum_{i=1}^\infty (1-\alpha)^{i-1}%
        =%
        \frac{W}{\alpha}\\
      &=%
        O( \nSets \eps^{-2} \log n),
    \end{align*}
    because of linearity of expectations, and since $L_i$ and
    $Y_{i-1}$ are independent.
\end{proof}

\begin{remark}%
    \remlab{whp:approx} %
    The query time of \lemref{almost-uniform} deteriorates to
    $O\pth{ \nSets \eps^{-2} \log^2 n}$ if one wants the bound to hold
    with high probability. This follows by restarting the query
    algorithm if the query time exceeds (say by a factor of two) the
    expected running time.  A standard application of Markov's
    inequality implies that this process would have to be restarted at
    most $O( \log n)$ times, with high probability.
\end{remark}

\begin{remark}
    The sampling algorithm is independent of whether or not we fully
    know the underlying family $\Family$ and the sub-family
    $\FamilyA$. This means the past queries do not affect the sampled
    object reported for the query $\FamilyA$. Therefore, the almost
    uniform distribution property holds in the presence of several
    queries and independently for each of them.
\end{remark}

\subsection{Almost uniform sampling via simulation}
\seclab{almost:uniform}

It turns out that one can avoid the degree approximation stage in the
above algorithm, and achieve only a polylogarithmic dependence on 
$\eps^{-1}$. 
To this end, let $x$ be the element picked. We need
to simulate a process that accepts $x$ with probability
$1/\degX{x}$.

We start with the following natural idea for estimating $\degX{x}$ --
probe the sets randomly (with replacement), and stop in the $i$\th
iteration if it is the first iteration where the probe found a set
that contains $x$. If there are $\nSets$ sets, then the distribution
of $i$ is geometric, with probability $p = \degX{x}/\nSets$. In
particular, in expectation, $\Ex{i} = \nSets/\degX{x}$, which implies
that $\degX{x} = \nSets /\Ex{i}$. As such, it is natural to take
$\nSets/i$ as an estimation for the degree of $x$. Thus, to simulate a
process that succeeds with probability $1/\degX{x}$, it would be
natural to return $1$ with probability $i/\nSets$ and $0$
otherwise. Surprisingly, while this seems like a heuristic, it does
work, under the right interpretation, as testified
by the following.%

\begin{lemma}
    \lemlab{first}%
    Assume we have $\nSets$ urns, and exactly $\degC > 0$ of them, are
    non-empty. Furthermore, assume that we can check if a specific urn is
    empty in constant time. Then, there is a randomized algorithm,
    that outputs a number $Y \geq 0$, such that $\Ex{Y}=1/\degC$. The
    expected running time of the algorithm is $O(\nSets/\degC)$.
\end{lemma}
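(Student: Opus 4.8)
The plan is to implement directly the heuristic sketched just before the statement, and to argue that although the quantity it produces is \emph{not} a probability, its expectation is exactly $1/\degC$. Concretely: the algorithm repeatedly samples an urn uniformly at random, \emph{with replacement}, and in each round tests in $O(1)$ time whether the sampled urn is non-empty; it halts in the first round in which a non-empty urn is sampled. Let $N$ be this (random) number of rounds, and let the algorithm output $Y = N/\nSets$. Since $Y \ge 0$, it then remains only to compute $\Ex{Y}$ and the expected running time.

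The key step is to observe that, because the urns are sampled independently and with replacement and exactly $\degC$ of the $\nSets$ urns are non-empty, each round independently hits a non-empty urn with probability $p = \degC/\nSets$. Hence $N$ is geometric with parameter $p$; since $\degC>0$ we have $p>0$, so the process terminates almost surely and $\Ex{N} = 1/p = \nSets/\degC$. Consequently $\Ex{Y} = \Ex{N}/\nSets = 1/\degC$, as required. For the running time, each round does one uniform sample and one emptiness test, i.e.\ $O(1)$ work, so the total work is $O(N)$ and its expectation is $O(\Ex{N}) = O(\nSets/\degC)$.

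The one genuinely subtle point — and the reason the lemma outputs ``a number $Y\ge 0$'' rather than a coin flip — is that $Y = N/\nSets$ can exceed $1$ (exactly when $N > \nSets$), so one cannot literally ``return $1$ with probability $i/\nSets$'' as the naive heuristic suggests. This is precisely the ``right interpretation'' alluded to above: the downstream rejection sampler needs an \emph{unbiased estimator of} $1/\degC$, not a bounded acceptance probability, and $Y = N/\nSets$ is such an estimator by the computation above. (One can even note that an algorithm ignorant of $\degC$ cannot output a $[0,1]$-valued $Y$ with $\Ex{Y} = 1/\degC$ in finite expected time — already $\degC = 1$ would force $Y \equiv 1$ — so allowing $Y > 1$ is unavoidable.) I do not anticipate any real obstacle beyond stating these points carefully; the one place to be slightly careful is the justification that $N$ is \emph{exactly} geometric, which rests on sampling with replacement so that the per-round success probability $\degC/\nSets$ does not drift.
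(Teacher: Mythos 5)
Your proof is correct and follows essentially the same route as the paper: probe urns uniformly with replacement until a non-empty one is found, output $i/\nSets$, and note that the number of probes is geometric with parameter $\degC/\nSets$. The only cosmetic difference is that you invoke the expectation of a geometric random variable directly, whereas the paper evaluates the series $\sum_i \frac{i}{\nSets}(1-p)^{i-1}p$ explicitly; the conclusion and running-time analysis are identical.
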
%
\begin{proof}
    The algorithm repeatedly probes urns (uniformly at random), until
    it finds a non-empty urn. Assume it found a non-empty urn in the
    $i$\th probe. The algorithm outputs the value $i/\nSets$ and
    stops.
    
    Setting $p = \degC/\nSets$, and let $Y$ be the output of the
    algorithm. we have that
    \begin{equation*}
        \Ex{\bigl. Y}%
        =%
        \sum_{i=1}^\infty \frac{i}{\nSets} (1-p)^{i-1} p
        =%
        \frac{p}{\nSets(1-p)}\sum_{i=1}^\infty i (1-p)^{i} 
        =%
        \frac{p}{\nSets(1-p)} \cdot \frac{1-p}{p^2}
        =%
        \frac{1}{p\nSets}%
        =%
        \frac{1}{\degC},
    \end{equation*}
    using the formula $\sum_{i=1}^\infty ix^i = {x}/{(1-x)^2}$.

    The expected number of probes performed by the algorithm until it
    finds a non-empty urn is $1/p = \nSets/\degC$, which implies that
    the expected running time of the algorithm is $O(\nSets/\degC)$.~
\end{proof}%

The natural way to deploy \lemref{first}, is to run its algorithm to
get a number $y$, and then return $1$ with probability $y$. The
problem is that $y$ can be strictly larger than $1$, which is
meaningless for probabilities. Instead, we backoff by using the value
$y/\Delta$, for some parameter $\Delta$. If the returned value is
larger than $1$, we just treat it at zero.  If the zeroing never
happened, the algorithm would return one with probability
$1/(\degX{x}\Delta)$ -- which we can use to our purposes via,
essentially, amplification. Instead, the probability of success is
going to be slightly smaller, but fortunately, the loss can be made
arbitrarily small by taking $\Delta$ to be sufficiently large.

\begin{lemma}
    \lemlab{second}%
    There are $\nSets$ urns, and exactly $\degC > 0$ of them are not
    empty. Furthermore, assume one can check if a specific urn is
    empty in constant time. Let $\BadProb \in (0,1)$ be a
    parameter. Then one can output a number $Z \geq 0$, such that
    $Z \in [0,1]$, and
    \begin{math}
        \Ex{Z} \in I= \bigl[ \tfrac{1}{\degC\Delta} - \BadProb,
        \tfrac{1}{\degC\Delta} \bigr],\Bigr.%
    \end{math}
    where
    $\Delta = \ceil{\smash{\ln \BadProb^{-1}} } + 4 = \Theta(\log
    \BadProb^{-1})$. The expected running time of the algorithm is
    $O( \nSets/\degC)$.

    Alternatively, the algorithm can output a bit $X$, such that
    $\Prob{X=1} \in I$.
\end{lemma}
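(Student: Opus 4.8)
The plan is to take the unbiased-but-unbounded estimator $Y$ produced by \lemref{first} and truncate it. Run the algorithm of \lemref{first}: it probes urns uniformly at random and, on hitting the first non-empty urn at probe $i$ (so $i$ is geometric with parameter $p = \degC/\nSets$), returns $Y = i/\nSets$ with $\Ex{Y} = 1/\degC$. Now set $Z = Y/\Delta$ if $Y \le \Delta$, and $Z = 0$ otherwise, i.e.\ $Z = (Y/\Delta)\,\mathbf{1}[Y \le \Delta]$; then $Z \in [0,1]$ by construction. Taking expectations,
\[
\Ex{Y/\Delta \cdot \mathbf{1}[Y \le \Delta]} = \tfrac1\Delta\pth{\Ex{Y} - \Ex{Y\,\mathbf{1}[Y > \Delta]}} = \tfrac{1}{\degC\Delta} - \tfrac1\Delta\Ex{Y\,\mathbf{1}[Y > \Delta]}.
\]
The subtracted term is non-negative, so $\Ex{Z} \le 1/(\degC\Delta)$ for free; the whole content of the lemma is therefore the tail estimate $\Ex{Y\,\mathbf{1}[Y > \Delta]} \le \Delta\BadProb$, which immediately gives $\Ex{Z} \ge 1/(\degC\Delta) - \BadProb$, hence $\Ex{Z} \in I$.

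For the tail estimate I would write, with $k = \floor{\Delta\nSets} + 1$ (so that $i \ge k \iff i/\nSets > \Delta$),
\[
\Ex{Y\,\mathbf{1}[Y > \Delta]} = \frac1\nSets\sum_{i \ge k} i(1-p)^{i-1}p = \frac{(1-p)^{k-1}\pth{1 + (k-1)p}}{\nSets p} = \frac{(1-p)^{k-1}\pth{1 + (k-1)p}}{\degC},
\]
where the middle equality is the standard evaluation of $\sum_{i\ge k} i x^{i-1} = \frac{d}{dx}\pth{x^k/(1-x)}$ at $x = 1-p$. Since $k - 1 = \floor{\Delta\nSets} \ge \Delta\nSets - 1$ and $p \le 1$, we get $(1-p)^{k-1} \le e^{-p(k-1)} \le e^{p - \Delta\degC} \le e^{1 - \Delta}$ (using $\degC \ge 1$), while $1 + (k-1)p \le 1 + \Delta\degC$; hence $\Ex{Y\,\mathbf{1}[Y>\Delta]} \le e^{1-\Delta}(1/\degC + \Delta) \le e^{1-\Delta}(1+\Delta)$. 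Because $\BadProb < 1$ forces $\Delta \ge 5$, we have $(1+\Delta)/\Delta \le 2$, so it suffices that $2e^{1-\Delta} \le \BadProb$, i.e.\ $\Delta \ge 1 + \ln 2 + \ln\BadProb^{-1}$ — which holds with room to spare for $\Delta = \ceil{\ln\BadProb^{-1}} + 4$. This last inequality is the only place where the precise value of $\Delta$ is used, and I expect it to be the only genuine calculation in the proof; the additive constant $4$ is there precisely to absorb the factor $e$, the $\ln 2$, and the floor/ceiling slack. (The main obstacle is really just organizing this tail bound cleanly; conceptually the truncation idea is the whole point.)

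The running time is inherited unchanged from \lemref{first}: truncation does not alter the underlying probing process, so the expected number of probes is still $1/p = \nSets/\degC$, i.e.\ the expected running time is $O(\nSets/\degC)$ (one may additionally halt the moment $i$ exceeds $\Delta\nSets$, which can only help). Finally, for the bit version, draw an independent uniform $U \in [0,1]$ and output $X = \mathbf{1}[U \le Z]$; then $\ProbCond{X = 1}{Z} = Z$, so $\Prob{X = 1} = \Ex{Z} \in I$, with the same expected running time.
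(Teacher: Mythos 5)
Your proposal is correct and follows essentially the same route as the paper: the same truncated, rescaled estimator $Z=i/(\nSets\Delta)$, zeroed once the probe count exceeds roughly $\nSets\Delta$, with $\Ex{Z}\le 1/(\degC\Delta)$ immediate from truncation and the lower bound reduced to a tail estimate, plus the identical randomized-rounding step for the bit version and the same $O(\nSets/\degC)$ running-time argument. The only difference is cosmetic and valid: you evaluate the tail of the geometric series in closed form and bound it by $e^{1-\Delta}(1+\Delta)\le\Delta\BadProb$, whereas the paper groups the tail into blocks of $\nSets$ consecutive indices and bounds the resulting series $\sum_{j\ge\Delta}\frac{j+1}{\Delta}e^{-\degC j}$ via a ratio test; your constants check out with $\Delta=\ceil{\ln\BadProb^{-1}}+4$.
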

\begin{proof}
    We modify the algorithm of \lemref{first}, so that it outputs
    $i/(\nSets\Delta)$ instead of $i/\nSets$. If the algorithm does
    not stop in the first $\nSets\Delta+1$ iterations, then the
    algorithm stops and outputs $0$. Observe that the probability that
    the algorithm fails to stop in the first $\nSets\Delta$
    iterations, for $p = \degC / \nSets$, is
    \begin{math}
        (1-p)^{\nSets\Delta} \leq \exp\pth{ -\frac{\degC}{\nSets}
           \nSets \Delta}%
        \leq%
        \exp( -\degC \Delta) \leq%
        \exp( - \Delta) \ll \BadProb.
    \end{math}

    Let $Z$ be the random variable that is the number output by the
    algorithm. Arguing as in \lemref{first}, we have that
    \begin{math}
        \Ex{Z} \leq 1/(\degC\Delta).
    \end{math}
    More precisely, we have 
    \begin{math}
        \Ex{Z}%
        =%
        \frac{1}{\degC\Delta} - \sum_{i=\nSets\Delta+1}^\infty
        \frac{i}{\nSets \Delta} (1-p)^{i-1} p.
    \end{math}
    Let     
    \begin{align*}
      \sum_{i=\nSets j+1}^{\nSets(j+1)} \frac{i}{\nSets} (1-p)^{i-1} p
      &\leq%
        (j+1)\sum_{i=\nSets j+1}^{\nSets(j+1)}  (1-p)^{i-1} p
        =%
        (j+1)(1-p)^{\nSets j}\sum_{i=0}^{\nSets - 1}  (1-p)^{i} p
      \\
      &\leq%
        (j+1)(1-p)^{\nSets j}
        \leq%
        (j+1) \pth{1-\frac{\degC}{\nSets}}^{\nSets j}
        \leq%
        (j+1) \exp \pth{- \degC j }.
    \end{align*}

    Let $g(j) = \frac{j+1}{\Delta} \exp \pth{- \degC j }$.  We have
    that
    \begin{math}
        \Ex{Z}%
        \geq%
        \frac{1}{\degC \Delta} - \beta,
    \end{math}
    where $\beta = \sum_{j=\Delta}^\infty g(j)$.  Furthermore, for
    $j \geq \Delta$, we have
    \begin{equation*}
        \frac{g(j+1)}{g(j)} %
        =%
        \frac
        {  (j+2) \exp \pth{- \degC (j+1) }}
        {  (j+1) \exp \pth{- \degC j }}%
        \leq %
        \pth{1+\frac{1}{\Delta}} e^{-\degC}
        \leq %
        \frac{5}{4} e^{-\degC}
        \leq%
        \frac{1}{2}.
    \end{equation*}
    As such, we have that
    \begin{equation*}
        \beta%
        =%
        \sum_{j=\Delta}^\infty g(j)%
        \leq%
        2 g(\Delta)%
        \leq%
        2 \frac{\Delta+1}{\Delta} \exp \pth{- \degC \Delta }%
        \leq 
        4 \exp \pth{- \Delta }%
        \leq%
        \BadProb,
    \end{equation*}
    by the choice of value for $\Delta$. This implies that
    $\Ex{Z} \geq 1/(\degC\Delta) - \beta \geq 1/(\degC \Delta) -
    \BadProb$, as desired.

    The alternative algorithm takes the output $Z$, and returns $1$
    with probability $Z$, and zero otherwise.
\end{proof}%

\begin{lemma}%
    \lemlab{almost:uniform:2}%
    The input is a family of sets $\Family$ that one preprocesses in
    linear time.  Let $\FamilyA\subseteq\Family$ be a sub-family and
    let $n = \cardin{\bigcup\FamilyA}$, $\nSets = \cardin{\FamilyA}$,
    and let $\eps \in (0,1)$ be a parameter.  One can sample an
    element $x \in \bigcup \FamilyA$ with almost uniform probability
    distribution.  Specifically, the probability of an element to be
    output is $\aprxEps 1/n$. After linear time preprocessing, the
    query time is $O\pth{ \nSets \log (\nSets/\eps)}$, in expectation,
    and the query succeeds, with high probability (in $\nSets$).
\end{lemma}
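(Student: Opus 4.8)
The plan is to run exactly the rejection-sampling loop from the proof of \lemref{almost-uniform}, but to replace the explicit degree \emph{approximation} used there by the degree \emph{simulation} of \lemref{second}. During preprocessing we keep, as already described, a set representation for each $\setA\in\Family$, so that membership and uniform sampling cost $O(1)$ per set. Given the query $\FamilyA$, write $m = \Cardin{\FamilyA} = \sum_{\setA\in\FamilyA}\cardin{\setA}$ and $\nSets = \cardin{\FamilyA}$. First I would build, in $O(\nSets)$ time, a structure that samples a set $\setA\in\FamilyA$ with probability $\cardin{\setA}/m$ in $O(1)$ time (for instance an alias table; keeping this step at $O(1)$ per round is precisely what stops an extra logarithmic factor from surviving into the final bound). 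Fix a failure parameter $\BadProb = \Theta\pth{\eps/(\nSets\log(\nSets/\eps))}$, chosen small enough that $\nSets\Delta\BadProb \le \eps/3$, where $\Delta = \ceil{\ln\BadProb^{-1}}+4$; since $\ln\BadProb^{-1} = \Theta(\log(\nSets/\eps))$ this is self-consistent and gives $\Delta = \Theta(\log(\nSets/\eps))$. The algorithm then repeats: (i) sample $\setA\in\FamilyA$ with probability $\cardin{\setA}/m$; (ii) sample $x\in\setA$ uniformly; (iii) invoke the bit-output version of \lemref{second} with the $\nSets$ sets of $\FamilyA$ as ``urns'' -- urn $\setA$ being nonempty iff $x\in\setA$, so the number of nonempty urns is $\degX{x}$ -- and failure parameter $\BadProb$, obtaining a bit $X$ with $\Prob{X=1}\in\bigl[1/(\degX{x}\Delta)-\BadProb,\ 1/(\degX{x}\Delta)\bigr]$; (iv) if $X=1$ output $x$ and halt, else iterate.

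Correctness will follow exactly as in \lemref{almost-uniform}. A fixed $x\in\bigcup\FamilyA$ is the element sampled in step (ii) of a round with probability $\degX{x}/m$, so the probability $q_x$ that $x$ is output in that round is $(\degX{x}/m)\cdot\Prob{X=1\mid x\text{ sampled}}$, which lies in $\bigl[1/(m\Delta)-\degX{x}\BadProb/m,\ 1/(m\Delta)\bigr]$. Since $\degX{x}\le\nSets$ and $\nSets\Delta\BadProb\le\eps/3$, we get $q_x\in\bigl[(1-\eps/3)/(m\Delta),\ 1/(m\Delta)\bigr]$ for every $x$. The rounds are i.i.d., so conditioned on halting, $x$ is the reported element with probability $q_x/\sum_{y\in\bigcup\FamilyA}q_y$; as $q_x$ and every term of the denominator are within a $(1\pm\eps/3)$ factor of $1/(m\Delta)$, this ratio is within $(1\pm\eps)$ of $1/n$, i.e.\ $\aprxEps 1/n$. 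Note the loop halts with probability $1$, and this distribution guarantee is unconditional -- the hard iteration cutoff inside \lemref{second} is already absorbed into its additive $\BadProb$ error.

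For the running time, let $\alpha = \sum_{y}q_y$ be the per-round halting probability; by the above $\alpha \ge (1-\eps/3)\,n/(m\Delta) \ge n/(2m\Delta)$, so the expected number of rounds is $1/\alpha = O(m\Delta/n)$. In a round, steps (i)--(ii) cost $O(1)$, while step (iii) costs, in expectation over the choice of $x$, $\sum_{y}(\degX{y}/m)\cdot O(\nSets/\degX{y}) = O(\nSets n/m)$ by the running-time bound of \lemref{second} (here $\bigcup\FamilyA$ has $n$ elements). Hence the expected per-round work is $W = O(1+\nSets n/m) = O(\nSets n/m)$, using $m\le \nSets n$. As in \lemref{almost-uniform}, the work $L_i$ of round $i$ and the indicator $Y_{i-1}$ that the loop has not halted before round $i$ are independent (round $i$ uses fresh randomness), while $\Prob{Y_{i-1}=1} = (1-\alpha)^{i-1}$ and $\Ex{L_i}=W$; therefore the expected total running time is $\sum_i \Ex{Y_{i-1}}\Ex{L_i} \le W\sum_{i\ge 1}(1-\alpha)^{i-1} = W/\alpha = O(\nSets n/m)\cdot O(m\Delta/n) = O(\nSets\Delta) = O\pth{\nSets\log(\nSets/\eps)}$, the $O(\nSets)$ per-query setup being absorbed. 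The high-probability statement comes from restarting the query whenever its running time exceeds a fixed constant times this expectation, exactly as in \remref{whp:approx}.

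The only genuinely delicate points will be: (a) the joint choice of $\BadProb$ and $\Delta$, which must satisfy both $\nSets\Delta\BadProb = O(\eps)$ and $\Delta=\Theta(\log\BadProb^{-1})$ -- routine, since $\log(\nSets/\eps)$ is a fixed point up to constants; and (b) keeping step (i) at $O(1)$ per round so that the $m/n$ and $\nSets n/m$ factors cancel into $O(\nSets\Delta)$ with no stray logarithm. The conceptual heart -- that a biased coin with success probability $\Theta\pth{1/(\degX{x}\Delta)}$ is as useful as one with probability $1/\degX{x}$, because the common factor $\Delta$ dilates only the running time and not the output distribution -- is already supplied by \lemref{second}, so beyond these bookkeeping issues the argument is a direct adaptation of \lemref{almost-uniform}.
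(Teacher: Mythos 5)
Your proposal is correct and takes essentially the same route as the paper: repeat the set-then-element sampling of \lemref{q:exact}, accept via the urn-probing simulation of \lemref{second} with $\BadProb$ polynomially small in $\eps/\nSets$ (so $\Delta=\Theta(\log(\nSets/\eps))$), and bound the expected time by the rounds-times-work argument of \lemref{almost-uniform}. Your extra bookkeeping -- the explicit constant-time weighted set selection and the precise normalization showing the conditional output distribution is $\aprxEps 1/n$ -- merely spells out details the paper leaves implicit.
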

\begin{proof}
    The algorithm repeatedly samples an element $x$ using steps
    \itemref{s:sample} and \itemref{b:sample} of the algorithm of
    \secref{uniform}. The algorithm returns $x$ if the algorithm of
    \lemref{second}, invoked with $\BadProb = (\eps/\nSets)^{O(1)}$
    returns $1$. We have that $\Delta = \Theta( \log(\nSets/\eps) )$.
    Let $\alpha = 1/(\degX{x}\Delta)$.  The algorithm returns $x$ in
    this iteration with probability $p$, where
    $p \in [\alpha - \BadProb, \alpha]$.  Observe that
    $\alpha \geq 1/(\nSets\Delta)$, which implies that
    $\BadProb \ll (\eps/4 )\alpha$, it follows that
    $p \aprxEps 1/(\degX{x}\Delta)$, as desired.  The expected running
    time of each round is $O(\nSets/\degX{x})$.

    Arguing as in \lemref{almost-uniform}, this implies that each
    round, in expectation takes $O\pth{ n \nSets / m }$ time, where
    $m = \Cardin{\FamilyA}$. Similarly, the expected number of rounds,
    in expectation, is $O(\Delta m/n)$. Again, arguing as in
    \lemref{almost-uniform}, implies that the expected running time is
    $O(\nSets \Delta ) =O( \nSets \log (\nSets/\eps))$.
\end{proof}%

\begin{remark}%
    \remlab{whp:simul}%
    Similar to \remref{whp:approx}, the query time of
    \lemref{almost:uniform:2} can be made to work with high
    probability with an additional logarithmic factor. Thus with high
    probability, the query time is
    $O\pth{ \nSets \log(\nSets/\eps) \log n}$.
\end{remark}


\subsection{Handling outliers}

Imagine a situation where we have a marked set of outliers $\OL$. We
are interested in sampling from $\bigcup \FamilyA \setminus \OL$.  We
assume that the total degree of the outliers in the query is at most
$\mOL$ for some prespecified parameter $\mOL$. More precisely, we have
$\degY{\FamilyA}{\OL} = \sum_{x \in \OL} \degY{\FamilyA}{x} \leq
\mOL$.

\begin{lemma}
    \lemlab{outliers}%
    The input is a family of sets $\Family$ that one can preprocess in
    linear time. A query is a sub-family $\FamilyA \subseteq \Family$,
    a set of outliers $\OL$, a parameter $\mOL$, and a parameter
    $\eps \in (0,1)$.  One can either
    \begin{compactenumA}
        \item Sample an element $x \in \bigcup \FamilyA \setminus \OL$
        with $\eps$-approximate uniform distribution.  Specifically,
        the probabilities of two elements to be output is the same up
        to a factor of $1\pm \eps$.  \smallskip%
        \item Alternatively, report that
        $\degY{\FamilyA}{\OL} > \mOL$.
    \end{compactenumA}
    The expected query time is $O\pth{ \mOL + \nSets \log (N/\eps)}$, and the
    query succeeds with high probability, where
    $\nSets = \cardin{\FamilyA}$, and $N =
    \Cardin{\Family}$.
\end{lemma}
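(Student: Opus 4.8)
The plan is to recycle the simulation-based sampler of \lemref{almost:uniform:2} essentially verbatim, bolt on a rejection step for outliers, and add a work budget whose overflow serves as the certificate ``$\degY{\FamilyA}{\OL} > \mOL$''. Preprocess $\Family$ exactly as before. On a query, first compute $m = \Cardin{\FamilyA} = \sum_{\setA\in\FamilyA}\cardin{\setA}$ in $O(\nSets)$ time (this also sets up the weighted choice of step \itemref{s:sample}), fix $\Delta = \Theta(\log(N/\eps))$ and $\BadProb = (\eps/N)^{O(1)}$ as in \lemref{second}, and set a budget $B = c_1\mOL + c_2\nSets\log(N/\eps)$ for suitable constants. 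If $m \leq B$, \textbf{enumerate}: scan all $m$ slots of $\FamilyA$ in $O(m)=O(B)$ time, collect the distinct elements not in $\OL$, and return one of them uniformly at random (if there are none, report $(B)$ -- but see the remark on the empty case below). If $m > B$, run the \textbf{rejection sampler}: repeatedly pick a slot of $\FamilyA$ uniformly (so an element $x$ is picked with probability $\degX{x}/m$); if $x\in\OL$, reject and continue; otherwise invoke the algorithm of \lemref{second} with parameter $\BadProb$, and return $x$ if it outputs $1$. If the total work of the rejection sampler exceeds $B$, stop and report that $\degY{\FamilyA}{\OL} > \mOL$.

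For the distribution: enumeration is exactly uniform on $\bigcup\FamilyA\setminus\OL$. For the rejection sampler, the probability that a fixed non-outlier $x$ is output in a given round is $\tfrac{\degX{x}}{m}\cdot p_x$, where $p_x \aprxEps \tfrac{1}{\degX{x}\Delta}$ by \lemref{second}; hence this probability $\aprxEps \tfrac{1}{m\Delta}$, independently of which non-outlier $x$ we picked. So, conditioned on the sampler returning something, its output is $\eps$-approximately uniform on $\bigcup\FamilyA\setminus\OL$, exactly as in the analysis of \lemref{almost-uniform} and \lemref{almost:uniform:2}. Thus, whenever the algorithm returns an element at all, that element is $\eps$-approximately uniform, which is guarantee (A).

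The substantive step is the running-time analysis in the ``good'' case $\degY{\FamilyA}{\OL} \leq \mOL$, which also controls how often we falsely emit the report $(B)$. If $m \leq B$ we are done in $O(B)$ time. If $m > B$, write $n' = \cardin{\bigcup\FamilyA\setminus\OL}$ and note $m = \degY{\FamilyA}{\OL} + \sum_{x\notin\OL}\degX{x} \leq \mOL + n'\nSets$, so $n'\nSets > B - \mOL = \Omega(\nSets\log(N/\eps))$ and therefore $n' = \Omega(\Delta)$. Now repeat the bookkeeping of \lemref{almost-uniform}: the expected per-round cost of the rejection sampler is $\sum_{x\in\OL}\tfrac{\degX{x}}{m}\,O(1) + \sum_{x\notin\OL}\tfrac{\degX{x}}{m}\,O(\nSets/\degX{x}) = O\bigl((\mOL + n'\nSets)/m\bigr)$ (outliers cost only the $O(1)$ membership test), and the per-round success probability is $\Theta\bigl(n'/(m\Delta)\bigr)$; plugging these into the ``$L_i$ independent of $Y_{i-1}$'' computation of \lemref{almost-uniform} gives expected total time $O\bigl((\mOL + n'\nSets)\Delta/n'\bigr) = O\bigl(\mOL\Delta/n' + \nSets\Delta\bigr) = O\bigl(\mOL + \nSets\log(N/\eps)\bigr)$, using $n' = \Omega(\Delta)$. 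So in both sub-cases the expected work is $O(B)$, and Markov's inequality makes the probability of overshooting $B$ a small constant, amplified to high probability by a constant number of independent restarts as in \remref{whp:simul}; conversely, exceeding the budget happens (with high probability) only when $\degY{\FamilyA}{\OL} > \mOL$, which is what licenses report $(B)$. (If $\bigcup\FamilyA\setminus\OL=\emptyset$ and $\degY{\FamilyA}{\OL}\le\mOL$ then $m\le\mOL\le B$, so the enumeration branch detects the empty neighborhood directly; one may simply assume $\bigcup\FamilyA\setminus\OL\neq\emptyset$.)

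The main obstacle is exactly this case split. The rejection sampler alone can squander $\Theta(\mOL\log(N/\eps))$ time rejecting outliers when non-outliers are scarce ($n'$ tiny), which would break the claimed bound; the enumeration branch is what rescues those instances, and the inequality $m \leq \mOL + n'\nSets$ is precisely what guarantees that whenever $m$ is too large for cheap enumeration, $n'$ is automatically large enough for the rejection-sampler bound to stay below the budget. Everything else -- the geometric-series estimate of \lemref{second}, the $(1\pm\eps)$-uniformity, and the dependent-variable running-time argument -- is a direct re-run of the analyses already established for \lemref{second} and \lemref{almost:uniform:2}.
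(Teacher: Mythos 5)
Your route is genuinely different from the paper's. The paper never leaves outliers in place to be re-sampled: whenever an outlier is encountered it is \emph{deleted} from the set in which it was found (moved to the end of the active prefix of the array, with the active size decremented), so the total work chargeable to outliers is deterministically $O(\mOL)$, the rule ``stop once more than $\mOL$ outlier encounters have occurred'' is a \emph{deterministic} certificate that $\degY{\FamilyA}{\OL} > \mOL$, and in the good case the run is never truncated, so the distribution guarantee of \lemref{almost:uniform:2} applies verbatim. Your substitute mechanism --- rejection without deletion, the enumeration branch when $m \leq B$, and the inequality $m \leq \degY{\FamilyA}{\OL} + n'\nSets$ forcing $n' = \Omega(\Delta)$ when $m > B$ --- is a clever alternative way to control the \emph{expected work} in the good case, and that part of your analysis is sound.

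The gap is in the distributional claim, which you treat as automatic. Your work budget does double duty (time cap and certificate for (B)), and truncating a run at budget $B$ conditions the returned element on the event ``the run finished within $B$''; you give no argument that this conditioning stays within the $1\pm\eps$ slack. The rounds are i.i.d., but the cost of the \emph{accepting} round is correlated with the identity of the accepted element: given that $x$ is accepted, the probing cost of \lemref{second} in that round has expectation $\Theta(\nSets/\degX{x})$ and can be as large as $\nSets\Delta$, i.e., comparable to $B$ itself, so low-degree elements are more likely to be censored by the budget than high-degree ones. When $B$ is only a constant multiple of the expected work (which is all Markov gives you), the censoring probabilities can differ across elements by an additive constant, so the conditional output distribution can be distorted by a constant factor rather than $1\pm\eps$; restarting on overflow does not repair this, since the element finally returned always comes from an attempt conditioned on finishing within budget. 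Forcing the censoring variation down to $O(\eps)$ requires inflating $B$ (e.g., by a $\log(1/\eps)$ factor via tail bounds), which degrades the stated query bound in the overflow/bad case. Two smaller points: your report (B) is only correct with high probability, and ``a constant number of independent restarts'' does not yield high probability --- you need $\Theta(\log N)$ attempts --- whereas the paper's count of deleted outlier occurrences certifies $\degY{\FamilyA}{\OL} > \mOL$ with certainty. The deletion idea is precisely the missing ingredient that eliminates the budget, the truncation bias, and the probabilistic certificate all at once.
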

\begin{proof}
    The main modification of the algorithm of
    \lemref{almost:uniform:2}, is that whenever we encounter an
    outlier (the assumption is that one can check if an element is an
    outlier in constant time), then we delete it from the set $\setA$
    where it was discovered. If we implement sets as arrays, this can
    be done by moving an outlier object to the end of the active
    prefix of the array, and decreasing the count of the active
    array. We also need to decrease the (active) size of the set. If
    the algorithm encounters more than $\mOL$ outliers then it stops
    and reports that the number of outliers is too large.
    
    Otherwise, the algorithm continues as before. The only difference
    is that once the query process is done, the active count (i.e.,
    size) of each set needs to be restored to its original size, as is
    the size of the set. This clearly can be done in time proportional
    to the query time.
\end{proof}%


\section{In the search for a fair near neighbor}
\seclab{s:f:nn}

In this section, we employ our data structure of \secref{DS} to show
the two results on uniformly reporting a neighbor of a query point
mentioned in \secref{results}. First, let us briefly give some
preliminaries on \LSH. We refer the reader to \cite{him-anntr-12} for
further details. Throughout the section, we assume that our metric
space, admits the \LSH data structure.


\subsection{Background on \LSH}

\paragraph{Locality Sensitive Hashing (\LSH).}
Let $\DS$ denote the data structure constructed by \LSH, and let $c$
denote the approximation parameter of \LSH. The data-structure $\DS$
consists of $L$ hash functions $g_1,\ldots,g_L$ (e.g.,
$L \approx n^{1/c}$ for a $c$-approximate \LSH), which are chosen via
a random process and each function hashes the points to a set of
buckets. For a point $p\in \MS$, let $\bucket_i(p)$ be the bucket that
the point $p$ is hashed to using the hash function $g_i$. The
following are standard guarantees provided by the \LSH data structure
\cite{him-anntr-12}.

\begin{lemma}\lemlab{nn-single}
    For a given query point $q$, let $S = \bigcup_i
    \bucket_i(q)$. Then for any point $p\in \nbrY{q}{r}$, we have that
    with a probability of least $1-1/e-1/3$, we have 
    (i) $p\in S$ and
    %
    (ii) $|S\setminus \ballY{q}{cr}| \leq 3L$, i.e., the number of
    outliers is at most $3L$.
    Moreover, the expected number of outliers in any single bucket
    $H_i(q)$ is at most $1$.
\end{lemma}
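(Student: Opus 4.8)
The statement is \lemref{nn-single}, the standard LSH guarantee, and the plan is to recall the two-parameter LSH construction and run the textbook calculation. First I would fix the LSH family parameters: with $L\approx n^\rho$ hash functions, each a concatenation of $k$ primitive hashes, chosen so that a point at distance $\le r$ collides with $q$ under a single $g_i$ with probability $\ge p_1 \ge 1/n^\rho = 1/L$ (up to constants), while a point at distance $>cr$ collides with probability $\le p_2 = 1/n$. These are exactly the parameters from \cite{him-anntr-12}; I would state them and not re-derive the choice of $k$.

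\textbf{Part (i).} For a fixed $p\in\nbrY{q}{r}$, the probability that $p\notin S=\bigcup_i \bucket_i(q)$ is $\prod_{i=1}^L \Prob{g_i(p)\ne g_i(q)} \le (1-p_1)^L \le (1-1/L)^L \le 1/e$. Here I am using the standard calibration $L = \ceil{1/p_1}$ (or the constant-adjusted version) so that $L p_1 \ge 1$. Hence $p\in S$ with probability $\ge 1-1/e$.

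\textbf{Part (ii) and the single-bucket bound.} For the outlier count, fix a bucket index $i$ and a point $p$ with $\dist(p,q) > cr$; then $\Prob{p\in\bucket_i(q)} = \Prob{g_i(p)=g_i(q)} \le p_2 = 1/n$. Summing over the at most $n$ such points gives $\Ex{\cardin{\bucket_i(q)\setminus\ballY{q}{cr}}} \le n\cdot (1/n) = 1$, which is the ``moreover'' claim. Summing over all $L$ buckets, $\Ex{\cardin{S\setminus\ballY{q}{cr}}} \le L$, and Markov's inequality gives $\Prob{\cardin{S\setminus\ballY{q}{cr}} > 3L} \le 1/3$. A union bound over the failure of (i) (probability $\le 1/e$) and the failure of (ii) (probability $\le 1/3$) yields that both hold simultaneously with probability $\ge 1 - 1/e - 1/3$, as stated.

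\textbf{Main obstacle.} There is no real obstacle — this is a restatement of the classical LSH analysis — but the one point requiring a little care is the exact normalization of $L$ relative to $p_1$: one must be sure that the chosen $L$ makes $(1-p_1)^L \le 1/e$ \emph{and} keeps $\Ex{\text{outliers per bucket}} \le 1$ simultaneously, which is why the constant ``$3L$'' (rather than $L$) appears after Markov. I would simply cite \cite{him-anntr-12} for the parameter setting and present the two short probability computations above. The only modeling subtlety worth a sentence is that the bound on outliers per bucket is in expectation (not high probability per bucket), which is all that is needed downstream when feeding $\mOL = 3L$ into \lemref{outliers}.
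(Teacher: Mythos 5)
Your proof is correct and is exactly the standard LSH analysis that the paper relies on: the paper states \lemref{nn-single} without proof, citing \cite{him-anntr-12}, and your calculation --- calibrating $k,L$ so a near point collides under a single $g_i$ with probability at least $1/L$ and a far point with probability at most $1/n$, giving $(1-1/L)^L\le 1/e$ for part~(i), expectation at most $1$ outlier per bucket and hence at most $L$ overall, Markov for the $3L$ bound, and a union bound for $1-1/e-1/3$ --- is precisely that cited argument. No gaps; the only cosmetic point is that your $p_1,p_2$ denote collision probabilities of the concatenated functions $g_i$ rather than of the primitive hashes, which is worth stating explicitly.
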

Therefore, if we take $t=O(\log n)$ different data structures
$\DS_1,\ldots,\DS_t$ with corresponding hash functions $g^j_i$ to
denote the $i$\th hash function in the $j$\th data structure, we have
the following lemma.

\begin{lemma}
    \lemlab{nn-multi}%
    Let the query point be $q$, and let $p$ be any point in
    $\nbrY{q}{r}$. Then, with high probability, there exists a data
    structure $\DS_j$, such that $p \in S = \bigcup_i \bucket^j_i(q)$
    and $|S\setminus \ballY{q}{cr}|\leq 3L$.
\end{lemma}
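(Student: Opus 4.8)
The plan is a standard probability-amplification argument: a single \LSH data structure already satisfies both required properties with some fixed positive probability, and taking $t = \Theta(\log n)$ independent copies drives the failure probability down to $1/n^{O(1)}$.

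First, fix the query $q$ and the point $p \in \nbrY{q}{r}$. For $j \in \{1,\dots,t\}$ write $S_j = \bigcup_i \bucket^j_i(q)$ and let $\Event_j$ be the event that $\DS_j$ is ``good for $p$'', i.e., that both (i) $p \in S_j$ and (ii) $\cardin{S_j \setminus \ballY{q}{cr}} \le 3L$ hold. This is precisely the event whose probability is lower-bounded in \lemref{nn-single} (note that lemma bounds the conjunction of (i) and (ii), not each separately), so $\Prob{\Event_j} \ge \rho$, where $\rho = 1 - 1/e - 1/3 > 0$ is an absolute constant.

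Second, distinct data structures $\DS_j$ use independently drawn hash functions $g^j_i$, so the events $\Event_1, \dots, \Event_t$ are mutually independent. Hence the probability that no $\DS_j$ is good for $p$ equals $\prod_{j=1}^t (1 - \Prob{\Event_j}) \le (1-\rho)^t = (1/e + 1/3)^t$. Since $1/e + 1/3$ is a constant strictly smaller than $1$, taking $t = \ceil{C \ln n}$ for a sufficiently large constant $C$ makes this at most $1/n^{O(1)}$; thus with high probability there is some $j$ with $p \in S = S_j$ and $\cardin{S \setminus \ballY{q}{cr}} \le 3L$, which is the claim. If one wants this to hold simultaneously for every $p \in \nbrY{q}{r}$ (or every $p \in \PS$), a union bound over the at most $n$ relevant points costs only a constant-factor increase in $C$.

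There is essentially no obstacle here: the only care needed is to invoke \lemref{nn-single} for the conjunction of its two conclusions, so that $\Event_j$ is the correct event to amplify, and to rely on the mutual independence of the $t$ copies, which is legitimate since each $\DS_j$ is built from fresh random hash functions.
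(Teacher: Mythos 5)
Your proof is correct and matches the paper's (implicit) argument: the paper derives \lemref{nn-multi} directly from \lemref{nn-single} by exactly this amplification over $t=O(\log n)$ independent copies, each succeeding with the constant probability $1-1/e-1/3$ for the conjunction of the two events. Your added remarks on independence of the copies and the optional union bound over all $p\in \nbrY{q}{r}$ are the right (and standard) way to make the step explicit.
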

By the above, the space used by \LSH is $\dsS(n,c)=\tldO(n\cdot L)$
and the query time is $\dsQ(n,c)=\tldO(L)$.
\subsection{Approximate Neighborhood}

For $t=O(\log n)$, let $\DS_1,\ldots,\DS_t$ be data structures
constructed by \LSH. 
Let $\Family$ be the set of all buckets in all data structures, i.e.,
\begin{math}
    \Family = \Set{\bigl.\smash{\bucket_i^j}(p)}{ i\leq L , j\leq t,
       p\in \PS}.
\end{math}
For a query point $q$, consider the family $\FamilyA$ of all buckets
containing the query, i.e.,
$\FamilyA = \Set{\smash{H_i^j(q)}}{ i\leq L , j\leq t }$, and thus
$\cardin{\FamilyA} = O(L \log n)$. Moreover, we let $\OL$ to be the
set of outliers, i.e., the points that are farther than $cr$ from $q$.
Note that as mentioned in \lemref{nn-single}, the expected number of
outliers in each bucket of \LSH is at most $1$. Therefore, by
\lemref{outliers}, we immediately get the following result.

\begin{lemma}\lemlab{approx-neighborhood}
    Given a set $\PS$ of $n$ points and a parameter $r$, we can
    preprocess it such that given query $q$, one can report a point
    $p\in S$ with probability $\mu_p$ where
    $\prb/(1+\eps) \leq \mu_p \leq (1+\eps)\prb$, where $S$ is a point
    set such that $\nbrY{q}{r}\subseteq S \subseteq \nbrY{q}{cr}$, and
    $\prb = 1/|S|$. The algorithm uses space $\dsS(n,c)$ and its
    expected query time is $\tldO(\dsQ(n,c)\cdot \log (1/\eps))$. %
\end{lemma}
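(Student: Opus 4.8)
The statement is essentially a direct corollary of \lemref{outliers} applied to the LSH construction. The plan is to instantiate the sampling-from-union-of-sets machinery with the collection of all LSH buckets, verify that the outlier budget is small enough, and track the resulting parameters.

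First I would set up the collection: take $\Family$ to be the set of all buckets across the $t = O(\log n)$ data structures $\DS_1,\dots,\DS_t$, each storing $n$ points in $L \approx n^{1/c}$ hash tables, so $N = \Cardin{\Family} = \mathrm{poly}(n)$ and the total preprocessing is linear in the total size of all buckets, which is $\tldO(n L) = \dsS(n,c)$; this also gives the claimed space bound. For a query $q$, the query sub-family is $\FamilyA = \Set{H_i^j(q)}{i \le L,\, j \le t}$, so $\nSets = \cardin{\FamilyA} = O(L \log n)$. Then $\bigcup \FamilyA = S = \bigcup_{i,j} H_i^j(q)$, which by \lemref{nn-single} (applied to whichever $\DS_j$ is good, as in \lemref{nn-multi}) satisfies $\nbrY{q}{r} \subseteq S$ with high probability; and since buckets only contain input points, $S \subseteq \PS$, but more to the point $S \subseteq \nbrY{q}{cr} \cup \OL$ where $\OL$ is the set of points at distance $> cr$ from $q$.

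Next I would bound the outlier degree. By \lemref{nn-single}, the expected number of outliers in any single bucket $H_i(q)$ is at most $1$; summing over the $\nSets = O(L\log n)$ buckets in $\FamilyA$ gives $\Ex{\degY{\FamilyA}{\OL}} = O(L \log n)$, and with high probability $\degY{\FamilyA}{\OL} = \tldO(L) = \mOL$ (a Markov/Chernoff argument over the buckets, or simply using that with high probability the good $\DS_j$ has at most $3L$ outliers in $S$ and rerunning if the observed count exceeds the threshold, exactly as the sentence before \lemref{approx-neighborhood} asserts). Feeding $\FamilyA$, $\OL$, $\mOL = \tldO(L)$, and $\eps$ into \lemref{outliers} yields a sample from $\bigcup\FamilyA \setminus \OL = S \setminus \OL \subseteq \nbrY{q}{cr}$ with $\eps$-approximately uniform probability; since $\nbrY{q}{r} \subseteq S$ and all points of $\nbrY{q}{r}$ survive the outlier removal (they are within $r \le cr$), the sampled set $S' := S \setminus \OL$ indeed satisfies $\nbrY{q}{r} \subseteq S' \subseteq \nbrY{q}{cr}$, which is exactly the approximate-neighborhood guarantee with $\prb = 1/|S'|$.

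Finally I would collect the running time from \lemref{outliers}: it is $O(\mOL + \nSets \log(N/\eps))$ in expectation. Here $\mOL = \tldO(L)$, $\nSets = O(L \log n)$, and $\log(N/\eps) = O(\log n + \log(1/\eps))$, so the bound is $\tldO(L \cdot \log(1/\eps)) = \tldO(\dsQ(n,c) \cdot \log(1/\eps))$, as claimed, since $\dsQ(n,c) = \tldO(L)$ and the $\mathrm{poly}\log n$ factors are absorbed into $\tldO(\cdot)$. The main thing to be careful about — the only real "obstacle" — is the bookkeeping that $S \setminus \OL$ sits between the exact and approximate neighborhoods (so that the $\prb = 1/|S|$ guarantee is meaningful), and the high-probability control of the outlier count so that the $\mOL$ budget in \lemref{outliers} is not exceeded; both are handled by \lemref{nn-single} and \lemref{nn-multi} together with a restart-on-failure argument. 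Everything else is substitution of parameters.
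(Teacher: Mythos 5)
Your overall route is the same as the paper's: instantiate the union-of-sets sampler with outliers (\lemref{outliers}) on the family $\FamilyA$ of all buckets containing $q$ across the $t=O(\log n)$ \LSH structures, use \lemref{nn-multi} to get $\nbrY{q}{r}\subseteq \bigcup\FamilyA\setminus\OL\subseteq\nbrY{q}{cr}$, and read off the query time $O(\mOL+\nSets\log(N/\eps))=\tldO(L\log(1/\eps))=\tldO(\dsQ(n,c)\log(1/\eps))$. The parameter bookkeeping and the containment argument are fine.

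The genuine gap is your handling of the outlier budget. You claim that ``with high probability $\degY{\FamilyA}{\OL}=\tldO(L)$'' via a Markov/Chernoff argument, but this is not justified: \lemref{nn-single} only bounds the \emph{expected} number of outliers per bucket by $1$, Markov gives only a constant-probability bound on the total, and there is no independence or boundedness that would support a concentration argument (the same outlier can appear in many buckets). Indeed, the paper itself states at the start of its Exact Neighborhood subsection that the total number of outliers may be large with non-negligible probability and that $\mOL$ cannot be bounded, which is precisely why the separate, more involved data structure of \lemref{approx-dist-overall} exists. Your fallback of ``restart if the observed outlier count exceeds $3L$'' also does not come for free: \lemref{nn-multi} bounds the outliers only for the one good $\DS_j$, not for the union over all $t$ structures, so the restart event is not a high-probability event, and conditioning the output on it (or discarding runs) can bias the sampling distribution -- handling this correctly is exactly the content of the exact-neighborhood construction. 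The correct (and paper's) resolution is simpler: do not attempt a high-probability bound at all. Only the expectation $\Ex{\degY{\FamilyA}{\OL}}=\tldO(L)$ is needed, the $\mOL$ term in the running time of \lemref{outliers} is then bounded in expectation, and this matches the statement you are proving, which only promises \emph{expected} query time (the with-high-probability guarantee is deliberately deferred to \lemref{final-lem}).
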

\begin{proof}
    Let $S=\bigcup\FamilyA\setminus \OL$; by \lemref{nn-multi}, we
    know that $\nbrY{q}{r}\subseteq S \subseteq \nbrY{q}{cr}$, and
    moreover in expectation $\mOL\leq L = \cardin{\FamilyA}$. We apply
    the algorithm of \lemref{outliers}. The runtime of the algorithm
    is in expectation
    $\tldO(\cardin{\FamilyA}\log(1/\eps)) = \tldO(L\cdot \log(1/\eps))
    =\tldO(\dsQ(n,c)\cdot \log(1/\eps))$, and the algorithm produces
    an almost uniform distribution over the points in $S$.
\end{proof}%

\begin{remark}
    For the $L_1$ distance, the runtime of our algorithm is
    $\tldO(n^{(1/c)+o(1)})$ and for the $L_2$ distance, the runtime of
    our algorithm is $\tldO(n^{(1/c^2) + o(1)})$. These matches the
    runtime of the standard \LSH-based near neighbor algorithms up to
    polylog factors.
\end{remark}

\subsection{Exact Neighborhood}
As noted earlier, the result of the previous section only guarantees a
query time which holds in expectation. Here, we provide an algorithm
whose query time holds \emph{with high probability}.  Note that, here
we cannot apply \lemref{outliers} directly, as the total number of
outliers in our data structure might be large with non-negligible
probability (and thus we cannot bound $\mOL$). However, as noted in
\lemref{nn-multi}, with high probability, there exists a subset of
these data structures $J\subseteq [t]$ such that for each $j\in J$,
the number of outliers in $S_j = \bigcup_i \bucket^j_i(q)$ is at most
$3L$, and moreover, we have that
$\nbrY{q}{r}\subseteq \bigcup_{j\in J} S_j$.  Therefore, on a high
level, we make a guess $J'$ of $J$, which we initialize it to
$J'=[t]$, and start by drawing samples from $\FamilyA$; once we
encounter more than $3L$ outliers from a certain data structure
$\DS_j$, we infer that $j\notin J$, update the value of
$J' = J'\setminus \{j\}$, and set the weights of the buckets
corresponding to $\DS_j$ equal to $0$, so that they will never
participate in the sampling process. As such, at any iteration of the
algorithm we are effectively sampling from
$\FamilyA = \Set{ \smash{H_i^j}(q)}{ i\leq L , j\in J'}$.

\bigskip\noindent%
\textbf{Preprocessing.}  We keep $t=O(\log n)$ \LSH data structures
which we refer to as $\DS_1,\ldots,\DS_t$, and we keep the hashed
points by the $i$\th hash function of the $j$\th data structure in the
array denoted by $\bucket^j_i$. Moreover, for each bucket in
$\bucket^j_i$, we store its size $|\bucket^j_i|$.

\bigskip\noindent%
\textbf{Query Processing.}  We maintain the variables $z^j_i$ showing
the weights of the bucket $\bucket_i^j(q)$, which is initialized to
$|\bucket_i^j(q)|$ that is stored in the preprocessing
stage. Moreover, we keep the set of outliers detected from $H_i^j(q)$
in $\OL_i^j$ which is initially set to be empty.  While running the
algorithm, as we detect an outlier in $H_i^j(q)$, we add it to
$\OL_i^j$, and we further decrease $z_i^j$ by one.  Moreover, in order
to keep track of $J'$, for any data structure $\DS_j$, whenever
$\sum_i |\OL_i^j|$ exceeds $3L$, we will ignore all buckets in
$\DS_j$, by setting all corresponding $z^j_i$ to zero.

At each iteration, the algorithm proceeds by sampling a bucket
$\bucket_i^j(q)$ proportional to its weight $z^j_i$, but only among
the set of buckets from those data structures $\DS_j$ for which less
than $3L$ outliers are detected so far, i.e., $j\in J'$. We then
sample a point uniformly at random from the points in the chosen
bucket that have not been detected as an outlier, i.e.,
$H_i^j(q)\setminus\OL_i^j$. If the sampled point is an outlier, we
update our data structure accordingly. Otherwise, we proceed as
in \lemref{almost:uniform:2}.
%
%
\begin{definition}[Active data structures and active buckets]
    Consider an iteration $k$ of the algorithm. Let us define the set
    of \emph{active data structures} to be the data structures from
    whom we have seen less than $3L$ outliers so far, and let us
    denote their indices by $J'_k \subseteq [t]$, i.e.,
    $J'_k = \Set{j}{ \sum_i|\OL_j^i|<3L}$.
    
    Moreover, let us define the \emph{active buckets} to be all
    buckets containing the query in these active data structures,
    i.e.,
    $\FamilyA_k = \Set{\smash{H_i^j(q)} }{ i\leq L , j\in J'_k }$.
\end{definition}
\begin{observation}
    \obslab{exact-neighborhood-inclusion}%
    \lemref{nn-multi} implies that with high probability at any
    iteration $k$ of the algorithm
    $\nbrY{q}{r}\subseteq \bigcup\FamilyA_k$.
\end{observation}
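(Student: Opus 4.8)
The plan is to show that the active buckets $\FamilyA_k$ at any iteration $k$ never drop a data structure $\DS_j$ that belongs to the good set $J$ guaranteed by \lemref{nn-multi}, and that this good set alone already covers $\nbrY{q}{r}$. First I would invoke \lemref{nn-multi}: with high probability there is a (random but query-independent) set $J \subseteq [t]$ such that for every $j \in J$, the bucket-union $S_j = \bigcup_i \bucket_i^j(q)$ satisfies $|S_j \setminus \ballY{q}{cr}| \le 3L$ and $\nbrY{q}{r} \subseteq \bigcup_{j \in J} S_j$. (As stated in the excerpt, \lemref{nn-multi} follows from taking $t = O(\log n)$ independent copies of the single-structure guarantee of \lemref{nn-single} and a standard boosting argument; I would just cite it.) Condition on this high-probability event for the rest of the argument.

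The key step is the inclusion $J \subseteq J'_k$ for every iteration $k$. Recall $J'_k = \Set{j}{\sum_i |\OL_j^i| < 3L}$, where $\OL_i^j$ is the set of outliers discovered in $\bucket_i^j(q)$ up to iteration $k$. Every element ever placed in some $\OL_i^j$ is a point of $\bucket_i^j(q)$ that was detected to lie outside $\ballY{q}{cr}$, so at all times $\bigcup_i \OL_i^j \subseteq S_j \setminus \ballY{q}{cr}$, and these discovered-outlier sets are disjoint across $i$ (a point sits in one bucket per hash function). Hence $\sum_i |\OL_i^j| \le |S_j \setminus \ballY{q}{cr}| \le 3L$ whenever $j \in J$, so the strict threshold $< 3L$ is never violated for $j \in J$ — wait, here I need to be slightly careful, since the definition uses a strict inequality and the bound is $\le 3L$; the point is that a data structure is only \emph{deactivated} once the count \emph{exceeds} $3L$, i.e.\ reaches $3L+1$, so the deactivation rule in the Query Processing description (``whenever $\sum_i |\OL_i^j|$ exceeds $3L$'') is consistent with $J'_k$ retaining every $j$ whose true outlier count in $S_j$ is at most $3L$; thus $J \subseteq J'_k$ for all $k$. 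I would make this consistency explicit to avoid an off-by-one worry.

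Given $J \subseteq J'_k$, we get $\FamilyA_k = \Set{\smash{H_i^j(q)}}{i \le L,\, j \in J'_k} \supseteq \Set{\smash{H_i^j(q)}}{i \le L,\, j \in J}$, and therefore
\[
\bigcup \FamilyA_k \;\supseteq\; \bigcup_{j \in J}\bigcup_i \bucket_i^j(q) \;=\; \bigcup_{j \in J} S_j \;\supseteq\; \nbrY{q}{r},
\]
which is exactly the claim. The only real subtlety — and the step I would flag as the main obstacle — is the bookkeeping one just discussed: one must verify that the \emph{discovered} outliers of a good structure never exceed the budget $3L$, using that (a) discovered outliers are genuine outliers of $S_j$ and (b) the per-bucket outlier sets are disjoint, so their counts sum rather than over-count; once that is pinned down the containment is immediate. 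Everything else is a direct application of \lemref{nn-multi} conditioned on its success event.
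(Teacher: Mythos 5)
Your overall plan is the intended one: condition on the event of \lemref{nn-multi}, argue that every ``good'' data structure $\DS_j$ can never exhaust the outlier budget and hence stays in $J'_k$ for all $k$, and conclude the coverage of $\nbrY{q}{r}$ from $\bigcup_{j\in J} S_j$. The off-by-one remark (strict inequality in the definition of $J'_k$ versus ``exceeds $3L$'' in the query-processing text) is a fair catch of a minor inconsistency in the write-up and not a real obstacle.

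However, the key counting step as you justify it is wrong. You claim the sets $\OL_i^j$ are disjoint across $i$ because ``a point sits in one bucket per hash function,'' but within a single data structure $\DS_j$ there are $L$ \emph{different} hash functions, and an outlier $x$ lies in $\bucket_i^j(q)$ for every $i$ with $g_i^j(x)=g_i^j(q)$ --- possibly many $i$ simultaneously. When the algorithm detects $x$ in one bucket it deletes it only from that bucket, so the same outlier can be detected again in other buckets of $\DS_j$ and counted again in $\sum_i|\OL_i^j|$. Thus $\sum_i|\OL_i^j|$ counts outlier--bucket \emph{incidences}, not distinct outliers, and the inequality $\sum_i|\OL_i^j|\le|S_j\setminus\ballY{q}{cr}|$ does not follow; with only the distinct-count reading of \lemref{nn-multi}, a good structure could in principle be deactivated (e.g., $3L$ outliers each lying in many buckets). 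The correct route is via the incidence guarantee of the \LSH analysis, hinted at by the last clause of \lemref{nn-single}: the expected number of outliers in any single bucket is at most $1$, so with the stated probability the total number of outlier incidences over the $L$ buckets of a good structure is at most $3L$; since each detection consumes a distinct (point, bucket) incidence, $\sum_i|\OL_i^j|$ never exceeds $3L$ for $j\in J$, and your final chain of inclusions then goes through unchanged.
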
%

\begin{definition}[active size]
    For an active bucket $\bucket_i^j(q)$, we define its active size
    to be $z_i^j$ which shows the total number of points in the bucket
    that have not yet been detected as an outlier, i.e.,
    $|\bucket_i^j(q)\setminus \OL_i^j|$.
\end{definition}

\begin{lemma}
    \lemlab{approx-dist-overall}%
    Given a set $\PS$ of $n$ points and a parameter $r$, we can
    preprocess it such that given a query $q$, one can report a point
    $p\in \PS$ with probability $\mu_p$, so that there exists a value
    $\rho \in [0,1]$ where
    \begin{compactitem}
        \item For $p\in \nbrY{q}{r}$, we have
        $\frac{\rho}{(1+O(\eps))} \leq \mu_p \leq (1+O(\eps))\rho$. %
        \item For $p\in \nbrY{q}{cr}\setminus \nbrY{q}{r}$, we have
        $\mu_p \leq (1+O(\eps))\rho$.
        \item For $p\notin \nbrY{q}{cr}$, we have $\mu_p = 0$.
    \end{compactitem}
    The space used is $\tldO(\dsS(n,c))$ and the query time is
    $\tldO\bigl( \dsQ(n,c)\cdot \log(1/\eps))$ with high probability. %
\end{lemma}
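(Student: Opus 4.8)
The plan is to run the sampler of \lemref{almost:uniform:2} on the sub-collection $\FamilyA$ of all $\nSets = \cardin{\FamilyA} = O(L\log n)$ query buckets, but with the twist that $\FamilyA$ \emph{shrinks} during the query as outliers are detected and data structures get deactivated. I would first condition on the high-probability event over the \LSH preprocessing from \lemref{nn-multi}: there is a set $J\subseteq[t]$ of data structures, each with at most $3L$ outliers among the buckets $\bucket_i^j(q)$, with $\nbrY{q}{r}\subseteq\bigcup_{j\in J}\bigcup_i \bucket_i^j(q)$. A data structure in $J$ is then never deactivated (its running outlier count never exceeds the threshold), so $J\subseteq J'_k$ and hence $\nbrY{q}{r}\subseteq\bigcup\FamilyA_k$ at every iteration $k$ -- this is exactly \obsref{exact-neighborhood-inclusion}. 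The data structure is the one sketched above: a per-query weighted-sampling tree (\lemref{ds-tree}) over the active buckets keyed by the active sizes $z_i^j$, the acceptance test being the urn simulation of \lemref{second} run on the active buckets with $\BadProb=(\eps/\nSets)^{O(1)}$ and $\Delta=\Theta(\log(\nSets/\eps))$; a sampled point that turns out to be an outlier is deleted from its bucket (updating $z_i^j$ and the tree), and a data structure is zeroed out once it has yielded $3L$ outliers.

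The one structural fact that makes everything go through is that a round changes the active family \emph{only} when it detects an outlier -- a ``good round'' that samples a non-outlier point either outputs it or loops without touching the state. Hence the whole trajectory of active families $\FamilyA^{(1)}\supseteq\FamilyA^{(2)}\supseteq\cdots$ seen at successive good rounds is a function of the outlier-round coins alone, independent of the coins used inside good rounds. For the distribution, fix a good round with active family $\FamilyA^{(\ell)}$ and let $g_B$ be the number of non-outlier points of $\bigcup\FamilyA^{(\ell)}$ in an active bucket $B$: a fixed non-outlier $x\in\bigcup\FamilyA^{(\ell)}$ is sampled with probability $\degY{\FamilyA^{(\ell)}}{x}/\sum_B g_B$ and then accepted with probability $\aprxEps 1/(\degY{\FamilyA^{(\ell)}}{x}\Delta)$, so this round outputs $x$ with probability $\aprxEps 1/(\Delta\sum_B g_B)$ -- crucially the \emph{same} for every non-outlier point of the current $\bigcup\FamilyA^{(\ell)}$, in particular independent of its degree. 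Summing over $\ell$ and conditioning on the history up to the start of the $\ell$\th good round (which determines $\FamilyA^{(\ell)}$ and the event $H_\ell$ of having reached that round), I would write $\mu_p=\sum_{\ell\ge1}\Ex{\mathbf{1}_{H_\ell}\cdot\frac{1\pm\eps}{\Delta\sum_{B\in\FamilyA^{(\ell)}}g_B}\cdot\mathbf{1}[p\in\bigcup\FamilyA^{(\ell)}]}$ for every non-outlier $p$, and $\mu_p=0$ for $p\notin\nbrY{q}{cr}$ since outliers are always detected rather than output. Taking $\rho$ to be this sum with the ideal factor and with the last indicator removed gives a single value in $[0,1]$; the removed indicator is $\equiv 1$ for $p\in\nbrY{q}{r}$ by \obsref{exact-neighborhood-inclusion}, so $\rho/(1+\eps)\le\mu_p\le(1+\eps)\rho$ there, whereas for $p\in\nbrY{q}{cr}\setminus\nbrY{q}{r}$ removing the indicator only deletes nonnegative terms, so $\mu_p\le(1+\eps)\rho$.

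For the running time I would split the work into outlier rounds and good rounds. Each data structure is deactivated after at most $3L$ outlier detections, so at most $O(tL)=\tldO(L)$ outlier occurrences are ever handled, each in $O(\log\nSets)$ time for the tree update, plus $O(t)$ zeroings at $O(L)$ each: $\tldO(L)=\tldO(\dsQ(n,c))$ in total. For the good rounds, for any active family the expected work of a good round is $O\pth{\nSets\,G/\!\sum_B g_B}$ (the $O(\nSets/\degX{x})$ cost of \lemref{second} weighted by the $\degX{x}/\!\sum_B g_B$ sampling probability, where $G$ is the number of non-outlier points in $\bigcup\FamilyA^{(\ell)}$) while the probability the round halts is $\aprxEps G/(\Delta\sum_B g_B)$, so the ratio of the two is $O(\nSets\Delta)$ \emph{regardless of the active family}. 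Conditioning on the family trajectory as above makes the halting events at successive good rounds conditionally independent with these probabilities $\alpha_\ell$ (since good-round coins are independent of the trajectory), and the expected good-round work is then $\sum_\ell\bigl(\prod_{\ell'<\ell}(1-\alpha_{\ell'})\bigr)\Ex{L_\ell}\le O(\nSets\Delta)\sum_\ell\bigl(\prod_{\ell'<\ell}(1-\alpha_{\ell'})\bigr)\alpha_\ell\le O(\nSets\Delta)$; averaging over trajectories gives expected query time $O(\nSets\Delta)=\tldO(\dsQ(n,c)\log(1/\eps))$, and the restart-on-overrun trick of \remref{whp:simul} upgrades this to hold with high probability at the cost of an $O(\log n)$ factor hidden in $\tldO$. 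The space is that of the $t$ \LSH structures, $\tldO(nL)=\tldO(\dsS(n,c))$.

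Two loose ends: conditioning on the preprocessing event costs at most its failure probability $n^{-\Omega(1)}$ additively in each $\mu_p$, and since $\rho=\Omega(1/n)$ (the total output probability is $1$, spread over at most $n$ candidates) this is absorbed; together with the composition of the per-round $(1\pm\eps)$ errors this is where the $O(\eps)$ of the statement comes from. The step I expect to be the real work is precisely the adaptivity -- that the collection being sampled from changes mid-query, so one cannot black-box \lemref{almost:uniform:2} -- and the key that tames both the distribution and the running-time analyses is the observation that the trajectory of active families is a function of the outlier-round coins alone and hence independent of the randomness used inside the good rounds.
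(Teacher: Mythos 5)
Your proposal is correct and follows essentially the same route as the paper's proof: run the adaptive outlier-handling variant of the union-sampling algorithm on the \LSH buckets, use \obsref{exact-neighborhood-inclusion} and the monotonicity of the active sets to get the (almost) equal per-point probabilities on $\nbrY{q}{r}$ and the one-sided bound on the annulus, and split the query cost into outlier rounds ($\tldO(L)$ of them at $\tldO(1)$ each) and non-outlier rounds bounded as in \lemref{almost:uniform:2} and \remref{whp:simul}. Your explicit observation that the trajectory of active families depends only on the outlier-round randomness, and your explicit formula for $\rho$ as an average over histories, are just a more careful rendering of the conditioning step the paper states tersely (``conditioned on $K=k$, the output is almost uniform on $M_k$''), so no substantive difference.
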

\begin{proof}
    First note that the algorithm never outputs an outlier, and thus
    the third item is always satisfied. Next, let $K$ be a random
    variable showing the number of iterations of the algorithm, and
    for an iteration $k$, define the random variable
    $M_k = \nbrY{q}{cr} \cap \bigcup \FamilyA_k$ as the set of
    non-outlier points in the set of active buckets.  Conditioned on
    $K=k$, by \lemref{almost:uniform:2}, we know that the distribution
    of the output is almost uniform on $M_k$.  Moreover, we know that
    for all $k$ we have $M_k \subseteq M_{k-1}$, and that by
    \obsref{exact-neighborhood-inclusion}, $\nbrY{q}{r}\subseteq
    M_k$. Therefore, for all points in $\nbrY{q}{r}$ their probability
    of being reported as the final output of the algorithm is equal,
    and moreover, for all points in
    $\nbrY{q}{cr}\setminus \nbrY{q}{r}$, their probability of being
    reported is lower (as at some iteration, some of these points
    might go out of the set of active buckets). This proves the
    probability condition.

    To bound the query time, let us consider the iterations where the
    sampled point $p$ is an outlier, and not an outlier,
    separately. The total number of iterations where an outlier point
    is sampled is at most $3L \cdot t = \tldO(L)=\tldO(\dsQ(n,c))$ for
    which we only pay $\tldO(1)$ cost. For non-outlier points, their
    total cost can be bounded using \lemref{almost:uniform:2} and \remref{whp:simul} by
    $\tldO(\cardin{\FamilyA_1}\log(1/\eps)) = \tldO(L\cdot \log(1/\eps))
    =\tldO(\dsQ(n,c)\cdot \log(1/\eps))$.
\end{proof}%

\begin{lemma}
    \lemlab{final-lem}%
    Given a set $\PS$ of $n$ points and a parameter $r$, we can
    preprocess it such that given a query $q$, one can report a point
    $p\in S$ with probability $\mu_p$ where $\mu$ is an approximately
    uniform probability distribution:
    $\prb/ (1+\eps) \leq \mu_p \leq \prb(1+\eps)$, where
    $\prb = 1/ |\nbrY{q}{r}|$. The algorithm uses space $\dsS(n,c)$
    and has query time of
    $\tldO\bigl( \dsQ(n,c)\cdot
    \frac{|\nbrY{q}{cr}|}{|\nbrY{q}{r}|}\cdot \log(1/\eps) \bigr)$
    with high probability. %
\end{lemma}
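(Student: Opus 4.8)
The plan is to build the desired sampler on top of \lemref{approx-dist-overall} by rejection sampling. Run the algorithm of \lemref{approx-dist-overall} with accuracy parameter $\eps' = \Theta(\eps)$ chosen small enough that all of its $(1+O(\eps'))$ factors are at most $1+\eps$; since the dependence of that algorithm on the accuracy is only $\log(1/\eps')=O(\log(1/\eps))$, this does not affect the asymptotics. Each invocation returns a point $p\in\nbrY{q}{cr}$. If $\dist(q,p)\le r$, i.e. $p\in\nbrY{q}{r}$, output $p$ and stop; otherwise discard $p$ and invoke the sampler again.

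For correctness of the output distribution: \lemref{approx-dist-overall} guarantees a single value $\rho\in[0,1]$ with $\mu_p\in[\rho/(1+\eps'),(1+\eps')\rho]$ for every $p\in\nbrY{q}{r}$ and $\mu_p=0$ for $p\notin\nbrY{q}{cr}$, all conditioned on the one-time high-probability event of \lemref{nn-multi}/\obsref{exact-neighborhood-inclusion} that $\nbrY{q}{r}$ stays inside the union of the active buckets. Since the algorithm outputs some point of $\nbrY{q}{cr}$ in each round, we have $\sum_{p\in\nbrY{q}{cr}}\mu_p=1$, and conditioning on ``the returned point lies in $\nbrY{q}{r}$'' -- exactly the acceptance event of the rejection sampling -- rescales the probability of $p\in\nbrY{q}{r}$ to $\mu_p/\sum_{p'\in\nbrY{q}{r}}\mu_{p'}$. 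As all these numerators agree up to a factor of $(1+\eps')^2$, the conditional distribution is $(1+\eps)$-uniform on $\nbrY{q}{r}$, i.e. within a $1+\eps$ factor of $\prb=1/|\nbrY{q}{r}|$.

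For the running time, the key quantitative step is to lower-bound the per-round acceptance probability. From $1=\sum_{p\in\nbrY{q}{cr}}\mu_p\le (1+\eps')\rho\,|\nbrY{q}{cr}|$ we get $\rho\ge 1/((1+\eps')|\nbrY{q}{cr}|)$, hence the probability that one round accepts is $\sum_{p\in\nbrY{q}{r}}\mu_p\ge |\nbrY{q}{r}|\rho/(1+\eps')=\Omega(|\nbrY{q}{r}|/|\nbrY{q}{cr}|)$. Thus the number of rounds is geometric with mean $O(|\nbrY{q}{cr}|/|\nbrY{q}{r}|)$, so with high probability it is $\tldO(|\nbrY{q}{cr}|/|\nbrY{q}{r}|)$; since this is $\poly(n)$, a union bound over rounds together with \lemref{approx-dist-overall} and \remref{whp:simul} shows every round takes $\tldO(\dsQ(n,c)\log(1/\eps))$ time with high probability. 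Multiplying gives query time $\tldO(\dsQ(n,c)\cdot\frac{|\nbrY{q}{cr}|}{|\nbrY{q}{r}|}\cdot\log(1/\eps))$ with high probability; the space $\dsS(n,c)$ is inherited from \lemref{approx-dist-overall}, and a round whose running time exceeds twice its expectation can be aborted and restarted at an $O(\log n)$ overhead. (An alternative giving the variant bound $\dsQ+|\nbrY{q}{cr}\setminus\nbrY{q}{r}|$ mentioned in the introduction is, instead of discarding a sampled annulus point, to mark it as an outlier and delete it as in \lemref{outliers}, so each annulus point wastes at most one round.)

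The hard part is not any single computation but keeping the probabilistic bookkeeping straight: separating the one-time randomness of the LSH construction -- which is what makes $\rho$ well-defined, keeps $\mu$ two-sidedly close to $\rho$ on $\nbrY{q}{r}$, and keeps $\nbrY{q}{r}$ inside the active buckets -- from the independent internal randomness of the simulation-based sampler of \lemref{approx-dist-overall}, which must run fast on all $\tldO(|\nbrY{q}{cr}|/|\nbrY{q}{r}|)$ rounds, and checking that the geometric round count concentrates. Once one records that $\rho$ is a single value shared by all of $\nbrY{q}{r}$, the distributional part is routine.
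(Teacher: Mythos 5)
Your proposal is correct and follows the same route as the paper: run the algorithm of \lemref{approx-dist-overall} repeatedly, discard any output outside $\nbrY{q}{r}$, and observe that the conditional (accepted) distribution is almost uniform on $\nbrY{q}{r}$ while the per-round acceptance probability is $\Omega\bigl(|\nbrY{q}{r}|/|\nbrY{q}{cr}|\bigr)$, giving the stated query time. The paper's proof is just a terser version of this rejection-sampling argument; your explicit bookkeeping of $\rho$, the acceptance probability, and the concentration of the round count fills in details the paper leaves implicit.
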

\begin{proof}
    We run Algorithm of \lemref{approx-dist-overall}, and while its
    output is outside of $\nbrY{q}{r}$, we ignore it and run the
    algorithm again. By \lemref{approx-dist-overall}, the output is
    guaranteed to be almost uniform on $\nbrY{q}{r}$. Moreover, by
    \lemref{approx-dist-overall}, and because with high probability,
    we only need to run the algorithm
    $\tldO(\frac{|\nbrY{q}{cr}|}{|\nbrY{q}{r}|})$ times, we get the
    desired bound on the query time.
\end{proof}%

%
\section{Experiments}%
\seclab{experiments}%

In this section, we consider the task of retrieving a random point
from the neighborhood of a given query point, and evaluate the
effectiveness of our proposed algorithm empirically on real data
sets.

\bigskip\noindent%
\textbf{Data set and Queries.}  
We run our experiments on three datasets that  are standard benchmarks in the context of Nearest Neighbor algorithms (see \cite{dataset})
\begin{compactenumI}
\item Our first data set contains a random subset of
10K points in the \MNIST training data set
\cite{lecun1998gradient}\footnote{The dataset is available here:
   \href{http://yann.lecun.com/exdb/mnist/}{http://yann.lecun.com/exdb/mnist/}}. The
full data set contains 60K images of hand-written digits, where
each image is of size $28$ by $28$.  For the query, we use a random subset of
$100$ (out of 10K) images of the \MNIST test data set.  Therefore,
each of our points lie in a $784$ dimensional Euclidean space and each
coordinate is in $[0,255]$.
\item Second, we take SIFT10K image descriptors that contains 10K 128-dimensional points as data set and 100 points as queries \footnote{The dataset if available here: \href{http://corpus-texmex.irisa.fr/}{http://corpus-texmex.irisa.fr/}}.
\item Finally, we take a random subset of 10K words from the GloVe data set \cite{pennington2014glove} and a random subset of 100 words as our query. GloVe is a data set of 1.2M word embeddings in 100-dimensional space and we further normalize them to unit norm.
\end{compactenumI}
We use the $L_2$ Euclidean distance to
measure the distance between the points.

\bigskip\noindent%
\textbf{\LSH data structure and parameters.} We use the locality
sensitive hashing data structure for the $L_2$ Euclidean distance
\cite{ai-nohaa-08}. That is, each of the $L$ hash functions $g_i$, is
a concatenation of $k$ unit hash functions
$h_i^1\oplus\cdots\oplus h_i^k$. Each of the unit hash functions
$h_i^j$ is chosen by selecting a point in a random direction (by
choosing every coordinate from a Gaussian distribution with parameters
$(0,1)$). Then all the points are projected onto this one dimensional
direction. Then we put a randomly shifted one dimensional grid of
length $w$ along this direction. The cells of this grid are considered
as buckets of the unit hash function. 
For tuning the parameters of \LSH, we follow the method described in \cite{diim-lshsb-04}, and the manual of E2LSH library \cite{E2LSH}, as follows.

For MNIST, the average distance of a query to its nearest neighbor in the our
data set is around $4.5$. Thus we choose the near neighbor radius
$r = 5$. Consequently, as we observe, the $r$-neighborhood of at least
half of the queries are non-empty.  As suggested in
\cite{diim-lshsb-04} to set the value of $w=4$, we tune it between $3$
and $5$ and set its value to $w=3.1$.  We tune $k$ and $L$ so that the
false negative rate (the near points that are not retrieved by \LSH)
is less than $10\%$, and moreover the cost of hashing (proportional to
$L$) balances out the cost of scanning.  We thus get $k=15$ and
$L=100$. This also agrees with the fact that $L$ should be roughly square root of the total number of points. 
Note that we use a single \LSH data structure as opposed to
taking $t=O(\log n)$ instances. We use the same method for the other two data sets.
For SIFT, we use $R=255$, $w=4$, $k=15$, $L=100$, and for GloVe we use $R=0.9$, $w=3.3$, $k=15$, and $L=100$.

\bigskip\noindent%
\textbf{Algorithms.} Given a query point $q$, we retrieve all $L$
buckets corresponding to the query. We then implement the following
algorithms and compare their performance in returning a neighbor of
the query point.
\smallskip%
\begin{compactitem}[leftmargin=0.5cm]
    \item \textbf{Uniform/Uniform}: Picks bucket uniformly at random
    and picks a random point in bucket. %
    \item \textbf{Weighted/Uniform}: Picks bucket according to its
    size, and picks uniformly random point inside bucket.

    \item \textbf{Optimal}: Picks bucket according to size, and then
    picks uniformly random point $p$ inside bucket. Then it computes
    $p$'s degree \emph{exactly} and rejects $p$ with probability
    $1-1/deg(p)$.
    \item \textbf{Degree approximation}: Picks bucket according to
    size, and picks uniformly random point $p$ inside bucket. It
    approximates $p$'s degree and rejects $p$ with probability
    $1-1/deg'(p)$.
\end{compactitem}

\bigskip\noindent%
\textbf{Degree approximation method.} We use the algorithm of
\secref{almost:uniform} for the degree approximation: we implement a
variant of the sampling algorithm which repeatedly samples a bucket
uniformly at random and checks whether $p$ belongs to the bucket. If
the first time this happens is at iteration $i$, then it outputs the
estimate as $deg'(p)=L/i$.

\bigskip\noindent%
\textbf{Experiment Setup.}  In order to compare the performance of
different algorithms, for each query $q$, we compute $M(q)$: the set
of neighbors of $q$ which fall to the same bucket as $q$ by at least
one of the $L$ hash functions. Then for $100 |M(q)|$ times, we draw a
sample from the neighborhood of the query, using all four
algorithms. We compare the empirical distribution of the reported
points on $|M(q)|$ with the uniform distribution on it. More
specifically, we compute the total variation distance (statistical
distance)\footnote{For two discrete distributions $\mu$ and $\nu$ on a
   finite set $X$, the total variation distance is
   $\frac{1}{2}\sum_{x\in X} |\mu(x)-\nu(x)|$.}  to the uniform
distribution.  We repeat each experiment $10$ times and report the
average result of all $10$ experiments over all $100$ query points.

\bigskip\noindent%
\textbf{Results.}  \figref{res} shows the comparison between all four
algorithms. To compare their performance, we compute the total
variation distance of the empirical distribution of the algorithms to
the uniform distribution.  For the tuned parameters ($k=15$ ,
$L=100$), our results are as follows.  For MNIST, we see that our
proposed degree approximation based algorithm performs only $2.4$
times worse than the optimal algorithm, while we see that other
standard sampling methods perform $6.6$ times and $10$ times worse
than the optimal algorithm. For SIFT, our algorithm performs only
$1.4$ times worse than the optimal while the other two perform $6.1$
and $9.7$ times worse. For GloVe, our algorithm performs only $2.7$
times worse while the other two perform $6.5$ and $13.1$ times worse
than the optimal algorithm.

Moreover, in order get a different range of degrees and show that our algorithm 
works well for those cases, we further vary the parameters $k$ and $L$ of LSH.
More precisely, to get higher
ranges of the degrees, first we
decrease $k$ (the number of unit hash functions used in each of the
$L$ hash function); this will result in more collisions. Second, we
increase $L$ (the total number of hash functions). These are two ways
to increase the degree of points. For example for the MNIST data set, the
above procedure increases the degree range from $[1,33]$ to $[1,99]$.
\begin{figure}[!h]%
    \centering
    \begin{subfigure}{.5\textwidth}
        \centering
        \includegraphics[width=1.01\linewidth]{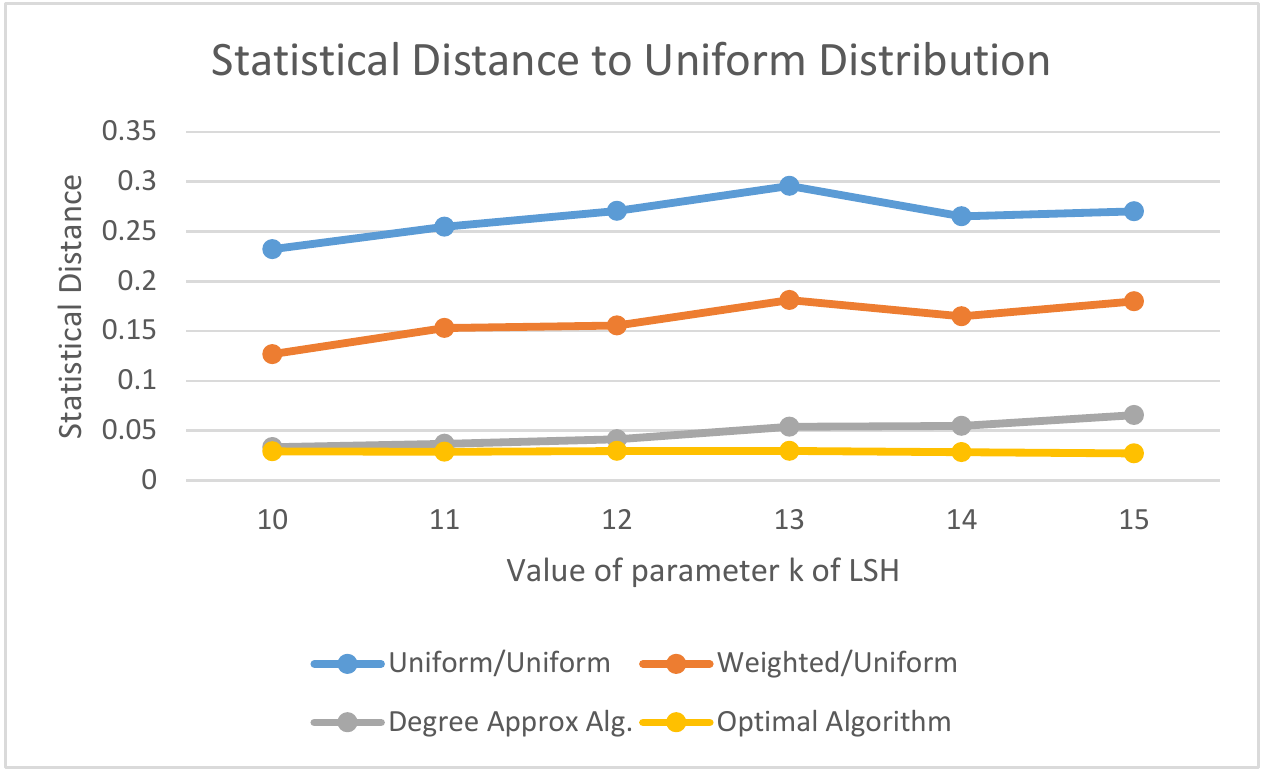}
        \caption{MNIST, varying the parameter $k$ of \LSH}
    \end{subfigure}%
    \begin{subfigure}{.5\textwidth}
        \centering
        \includegraphics[width=1.01\linewidth]{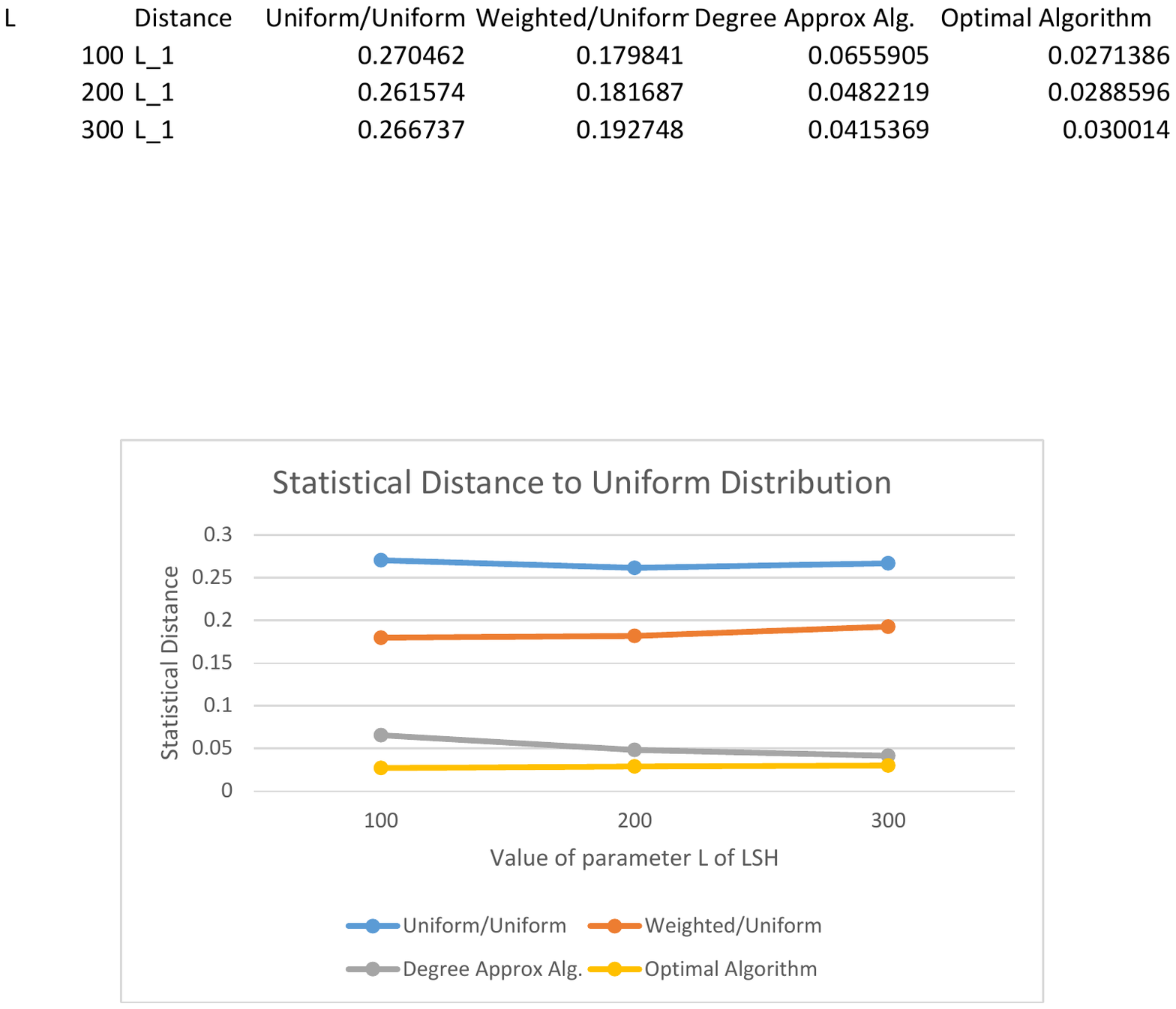}
        \caption{MNIST, varying the parameter $L$ of \LSH}
    \end{subfigure}

    \begin{subfigure}{.5\textwidth}
        \centering
        \includegraphics[width=1.01\linewidth]{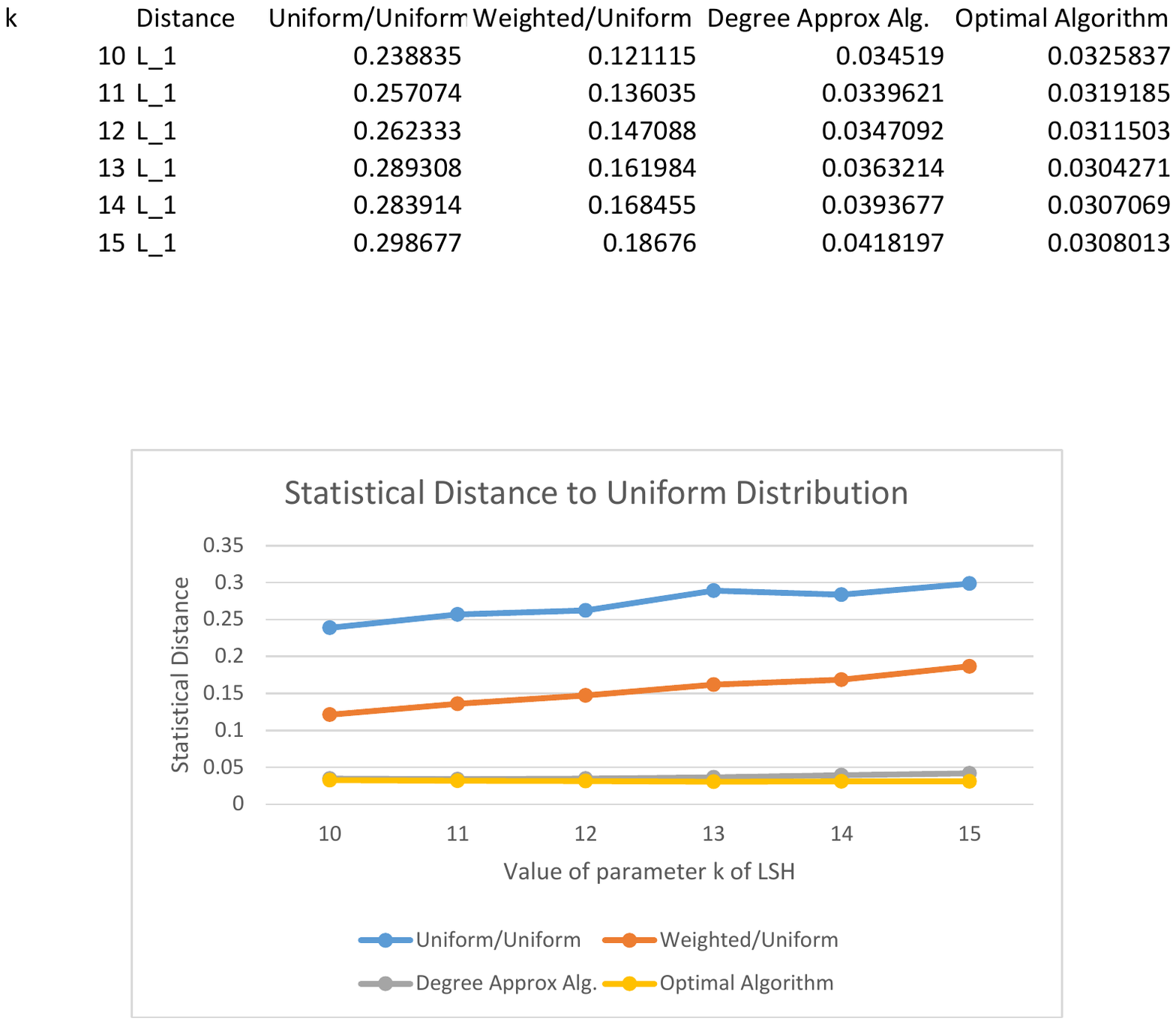}
        \caption{SIFT, varying the parameter $k$ of \LSH}
    \end{subfigure}%
    \begin{subfigure}{.5\textwidth}
        \centering
        \includegraphics[width=1.01\linewidth]{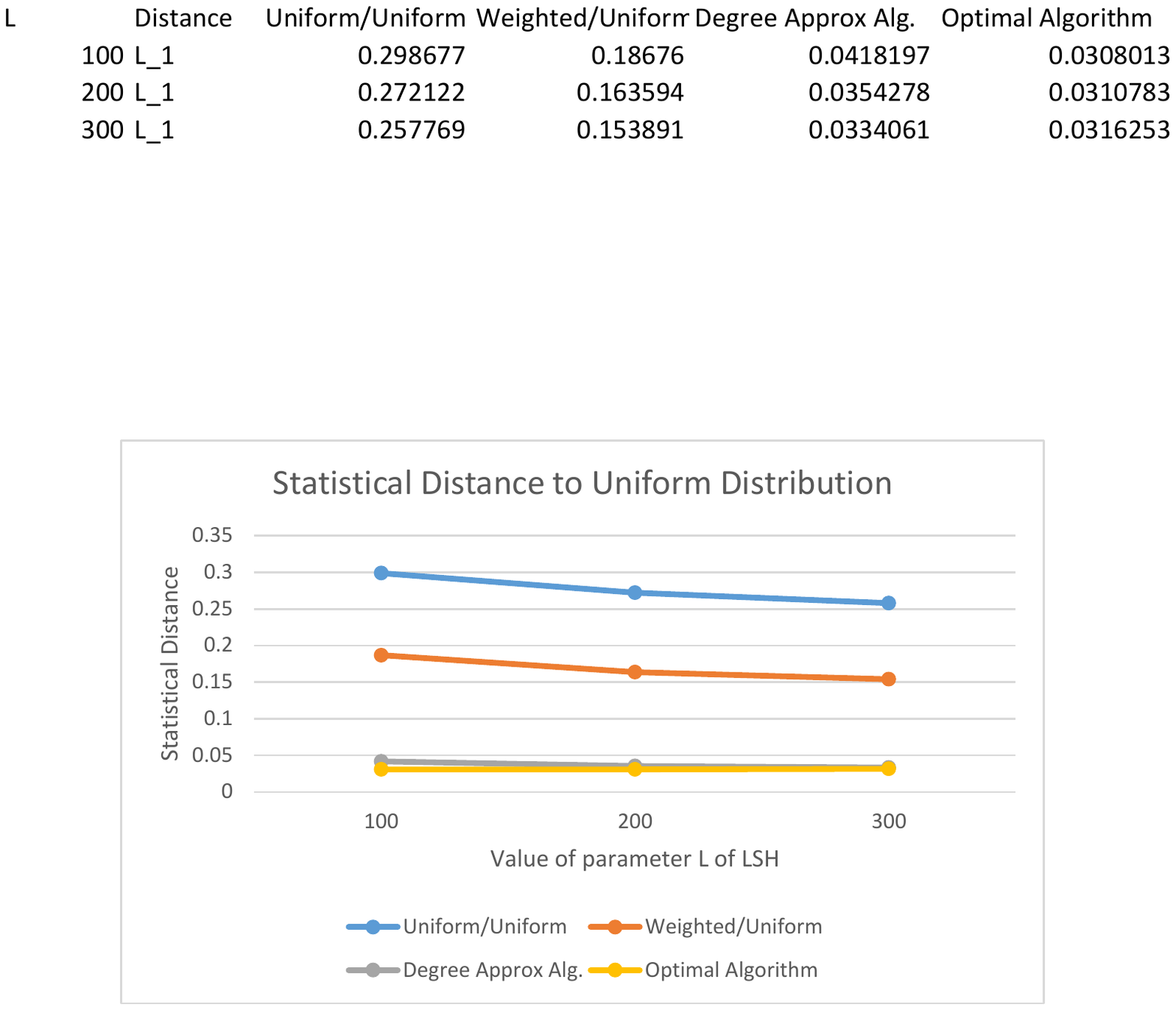}
        \caption{SIFT, varying the parameter $L$ of \LSH}
    \end{subfigure}

    \begin{subfigure}{.5\textwidth}
        \centering
        \includegraphics[width=1.01\linewidth]{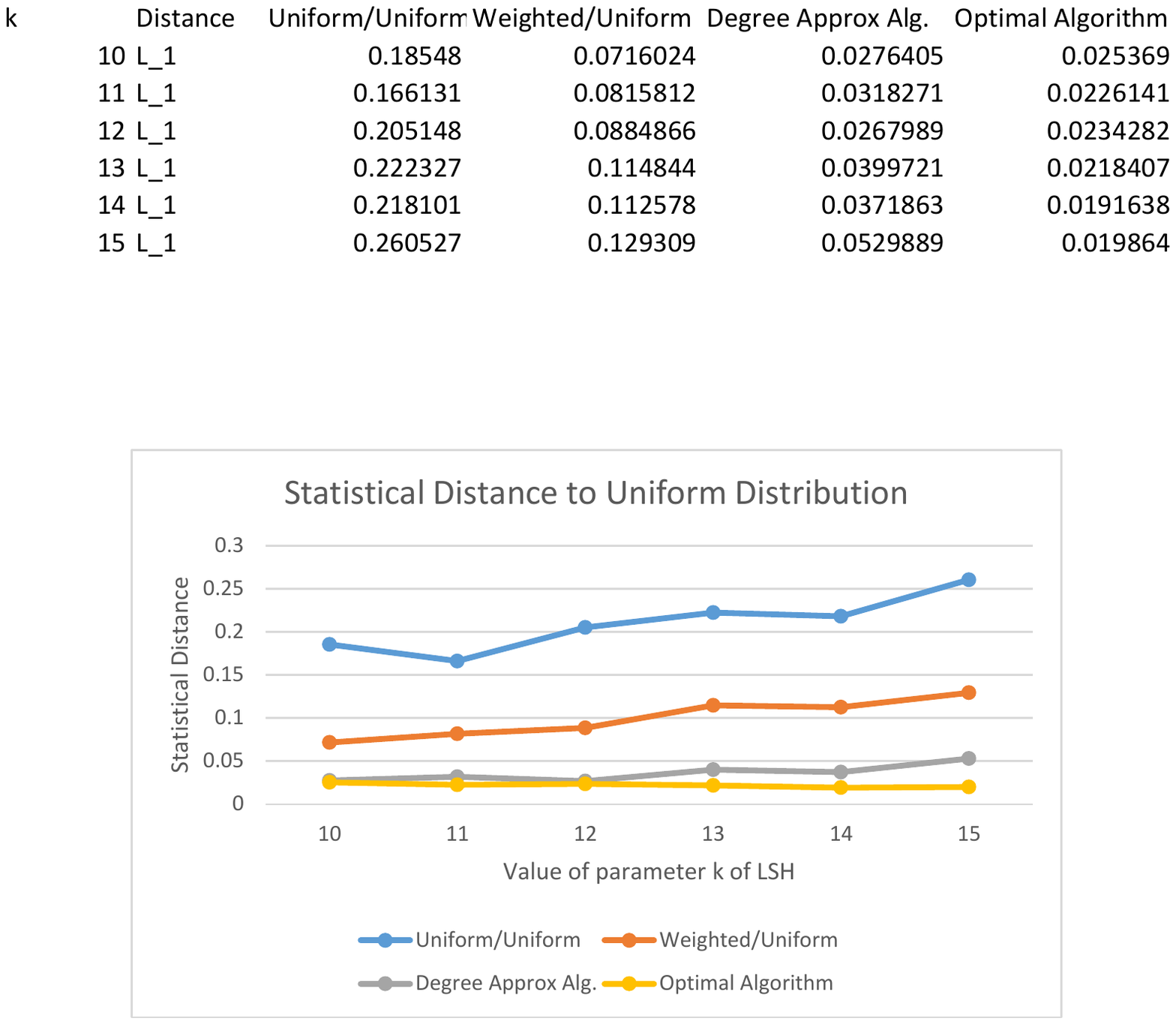}
        \caption{GloVe, varying the parameter $k$ of \LSH}
    \end{subfigure}%
    \begin{subfigure}{.5\textwidth}
        \centering
        \includegraphics[width=1.01\linewidth]{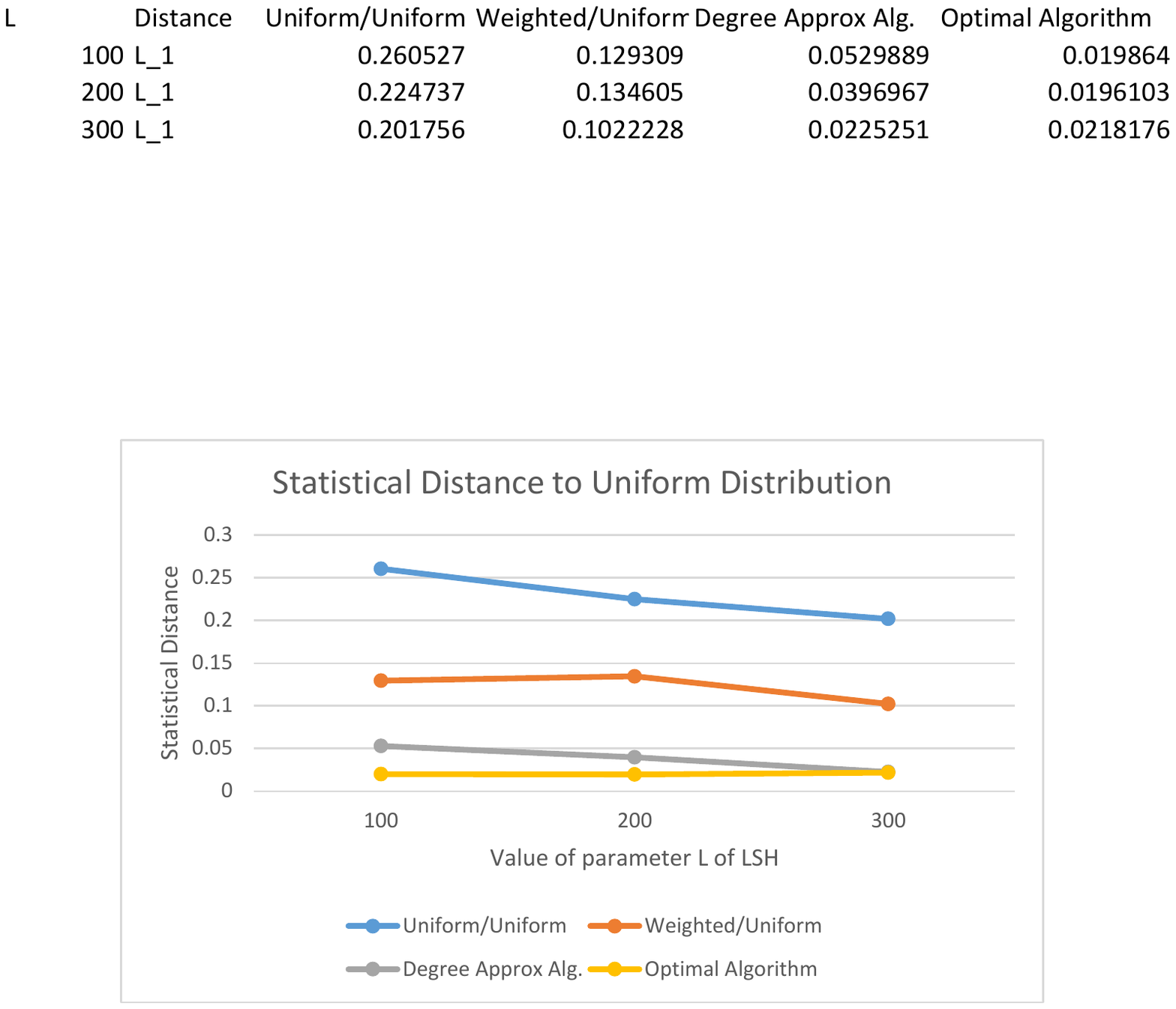}
        \caption{GloVe, varying the parameter $L$ of \LSH}
    \end{subfigure}

    \caption{Comparison of the performance of the four algorithms is
       measured by computing the statistical distance of their
       empirical distribution to the uniform distribution.}
    \figlab{res}
\end{figure}

\bigskip\noindent%
\textbf{Query time discussion.}  As stated in the experiment setup, in
order to have a meaningful comparison between distributions, in our
code, we retrieve a random neighbor of each query $100m$ times, where
m is the size of its neighborhood (which itself can be as large as
1000). We further repeat each experiment 10 times. Thus, every query
might be asked upto $10^6$ times. This is going to be costly for the
optimal algorithm that computes the degree exactly. Thus, we use the
fact that we are asking the same query many times and preprocess the
exact degrees for the optimal solution. Therefore, it is not
meaningful to compare runtimes directly. Thus we run the experiments
on a smaller size dataset to compare the runtimes of all the four
approaches: For $k=15$ and $L=100$, our sampling approach is twice
faster than the optimal algorithm, and almost five times slower than
the other two approaches. However, when the number of buckets (L)
increases from 100 to 300, our algorithm is 4.3 times faster than the
optimal algorithm, and almost 15 times slower than the other two
approaches.

\bigskip\noindent%
\textbf{Trade-off of time and accuracy.}  We can show a trade-off
between our proposed sampling approach and the optimal. For the MNIST
data set with tuned parameters ($k=15$ and $L=100$), by asking twice
more queries (for degree approximation), the solution of our approach
improves from 2.4 to 1.6, and with three times more, it improves to
1.2, and with four times more, it improves to 1.05.  For the SIFT data
set (using the same parameters), using twice more queries, the
solution improves from 1.4 to 1.16, and with three times more, it
improves to 1.04, and with four times more, it improves to 1.05.  For
GloVe, using twice more queries, the solution improves from 2.7 to
1.47, and with three times more, it improves to 1.14, and with four
times more, it improves to 1.01.

\section{Acknowledgement}
The authors would like to thank Piotr Indyk for the helpful discussions about the modeling and experimental sections of the paper.

\newcommand{\etalchar}[1]{$^{#1}$}
 \providecommand{\CNFX}[1]{ {\em{\textrm{(#1)}}}}


\end{document}